\documentclass{article}

\usepackage{microtype}
\usepackage{graphicx}
\usepackage{subcaption}
\usepackage{placeins}
\usepackage{booktabs} 
\usepackage{amsmath}
\usepackage{mathtools}
\usepackage{amssymb}
\usepackage{amsthm}
\newcommand{\defeq}{\vcentcolon=}

\usepackage[table]{xcolor}

\definecolor{Best}{RGB}{230,245,208}   
\definecolor{Second}{RGB}{198,219,239} 
\definecolor{Third}{RGB}{241,238,246}  

\newcommand{\best}[1]{\cellcolor{Best}\textbf{#1}}
\newcommand{\second}[1]{\cellcolor{Second}\textbf{#1}}
\newcommand{\third}[1]{\cellcolor{Third}\textbf{#1}}

\newcommand\ie{\textit{i.e.}}
\newcommand\eg{\textit{e.g.}}

\usepackage{hyperref}

\newcommand{\RomanNumeralCaps}[1]
    {\MakeUppercase{\romannumeral #1}}

\usepackage[preprint]{icml2026}

\usepackage[capitalize,noabbrev]{cleveref}

\theoremstyle{plain}
\newtheorem{theorem}{Theorem}[section]
\newtheorem{proposition}[theorem]{Proposition}
\newtheorem{lemma}[theorem]{Lemma}
\newtheorem{corollary}[theorem]{Corollary}
\theoremstyle{definition}
\newtheorem{definition}[theorem]{Definition}
\newtheorem{assumption}[theorem]{Assumption}
\theoremstyle{remark}

\icmltitlerunning{Latent-Hysteresis Graph ODEs}

\begin{document}

\twocolumn[
    \icmltitle{Latent-Hysteresis Graph ODEs: \\
        Modeling Coupled Topology-Feature Evolution via Continuous Phase Transitions}



    \icmlsetsymbol{equal}{*}

    \begin{icmlauthorlist}
        \icmlauthor{Qinhan Hou}{UH-CS,UH-ONCO}
        \icmlauthor{Jing Tang}{UH-ONCO}
    \end{icmlauthorlist}

    \icmlaffiliation{UH-CS}{Doctoral Program of Computer Science, University of Helsinki, Helsinki, Finland}
    \icmlaffiliation{UH-ONCO}{Research Program in Systems Oncology, Faculty of Medicine, University of Helsinki, Helsinki, Finland}

    \icmlcorrespondingauthor{Qinhan Hou}{hou.qinhan@helsinki.fi}

    \icmlkeywords{Machine Learning, ICML}

    \vskip 0.3in
]



\printAffiliationsAndNotice{}  

\begin{abstract}
    Graph neural ordinary differential equations (Graph ODEs) extend graph learning from discrete message-passing layers to continuous-time representation flows. While it supports adaptive long-range propagation, we show that Graph ODEs with strictly positive irreducible mixing operators face an inherent \emph{monostability trap}: in the long-time regime, information leakage is unavoidable and the dynamics converge to a single global consensus attractor. We propose the \textbf{Hysteresis Graph ODE (HGODE)}, which couples feature evolution with a latent topological potential driven by a learned pairwise force. A double-well edge potential and bipolarized gate allow edge states to polarize into connected or insulated phases while preserving differentiability. We provide asymptotic analysis of the collapse mechanism and the proposed hysteretic topology dynamics, and validate HGODE on theory-driven synthetic diagnostics and real-world graph benchmarks.
\end{abstract}

\section{Introduction}
Graph neural networks (GNNs) are a dominant paradigm for relational learning~\cite{zhou2018graph,wu2020comprehensive},
but their layer-wise updates provide only a discrete approximation to continuous graph dynamics.
Neural ordinary differential equations (NODEs)~\cite{chen2018neural} bridge this gap by replacing finite-depth transformations with continuous-time flows.
Building on this idea, Graph ODEs (GODEs)~\cite{poli2019graph, liu2025graph} model node representations as trajectories over a graph, which in principle supports adaptive long-range propagation beyond finite-depth message passing.

Despite this promise, we show that a broad class of Graph ODEs admits an intrinsic feature collapse risk in the long-time regime. 
Specifically, when the propagation operator is row-stochastic and remains strictly positive on a strongly connected support, the induced mixing is irreducible, and the dynamics converge to a unique global attractor. 
This condition is satisfied by many dense soft-weight constructions (\eg, temperature-scaled global attention or dense similarity kernels), and it implies unavoidable information leakage.

We address this limitation by explicitly modeling topology as a dynamical state. 
We propose the \textbf{Hysteresis Graph ODE (HGODE)}, which augments feature evolution with a co-evolving latent topological potential governed by bistable hysteresis dynamics. 
A learned pairwise force drives each edge potential within a non-convex double-well landscape; the resulting potential barrier induces hysteresis and endows the topology with structural memory. 
This coupled system enables differentiable phase transitions that polarize edges into connected versus insulated states, thereby changing the effective mixing structure. 
Consequently, HGODE provides a mechanism for avoiding a single global consensus basin and empirically preserves cluster-wise information in long-time propagation.

Our main contributions are:
\begin{itemize}
    \setlength{\itemsep}{1pt}
    \setlength{\parsep}{0pt}
    \setlength{\topsep}{2pt}
    \item We characterize the long-time behavior of diffusion-style Graph ODEs under strictly positive row-stochastic mixing, and extend the consensus-collapse result to a class of time-varying operators satisfying uniform positivity.
    \item We introduce a coupled feature--topology ODE where a double-well edge potential and learned force induce topological phase transitions that can move propagation toward reducible or block-structured mixing.
    \item We connect the theory to training via a force-margin objective aligned with the hysteresis threshold, and validate HGODE on theory-driven diagnostics and real-world benchmarks.
\end{itemize}

\section{Related Work}
\textbf{Graph neural ODEs and continuous-depth GNNs.}
Graph neural ODEs cast message passing as continuous-time dynamics by defining node representations as solutions of differential equations on graphs. Representative formulations such as GDE~\cite{poli2019graph} and GRAND~\cite{chamberlain2021grand} model propagation through diffusion-like dynamics (possibly with stochasticity), offering a principled view of infinite-depth architectures and adaptive computation. Existing designs often evolve features on a fixed topology or on dense soft-weight operators, which remain strictly positive on a strongly connected support in common instantiations~\cite{bodnar2022neural, rusch2022graph, maskey2024fractional}. Recent work has expanded Graph ODEs beyond simple diffusion to address more complex dynamics and over-smoothing. CSG-ODE~\cite{pmlr-v267-wang25dd} introduces a ``ControlSynth'' mechanism for dynamic graphs, focusing on temporal evolution and trajectory prediction. 
SEGNO~\cite{liu2024segno} and DuSEGO~\cite{wang2025dusegodualsecondorderequivariant} employ second-order ODEs to incorporate inertial inductive biases and equivariance, which naturally mitigate over-smoothing by maintaining particle momentum. 
Fractional and reaction-diffusion variants address the same collapse pressure through different mechanisms. FROND~\cite{kang2024frond} and DRAGON~\cite{zhao2024dragon} modify the time law of feature evolution through fractional-order dynamics, while GREAD~\cite{choi2023gread} and ACMP~\cite{wang2022acmp} add feature-space reaction or phase-transition terms. BuNN~\cite{bamberger2025bundle} instead performs message diffusion over vector-bundle structure. These methods are complementary to HGODE: they primarily change feature-time or feature-space dynamics, whereas our model makes the propagation support itself a continuous latent state.
In contrast, our work treats topology as a dynamical state and focuses explicitly on the long-time regime where the induced mixing structure determines whether collapse is inevitable.


\textbf{Over-smoothing and long-time feature collapse.}
Over-smoothing has been extensively studied in discrete GNNs, where repeated aggregation drives node representations toward low-frequency subspaces or consensus and consequently impairs discrimination~\cite{rusch2023survey, wu2023demystifying, wu2023a, hou2025exploringheterophilygraphleveltasks}. Theoretical analyses have linked this phenomenon to spectral contraction of the graph Laplacian and convergence of random walks to a stationary distribution~\cite{oono2020graph}. Standard mitigation strategies such as residual connections, normalization layers~\cite{Zhao2020PairNorm}, and personalized diffusion~\cite{gasteiger2018predict, min2020scattering, koishekenov2023reducing} are effective in finite-depth regimes. However, these approaches generally modify feature evolution over a fixed topology. Consequently, they do not explicitly control the asymptotic mixing properties of the operator itself~\cite{chamberlain2021grand}, leaving continuous-depth models susceptible to collapse in the infinite-time limit.

\textbf{Graph structure learning and adaptive adjacency.}
A large body of work learns or refines graph structure via adaptive adjacency matrices, metric-based graphs, and attention mechanisms that assign edge weights from features~\cite{franceschi2019learning, 10.5555/3495724.3497344, dgcnn, zheng2024rethinking}. Soft attention can be viewed as a continuous form of structure learning, yet it is typically dense and strictly positive, and thus often preserves global mixing in the long-time regime~\cite{lee2019attention, velivckovic2018graph, ye2021sparse}. Other approaches learn static latent graphs~\cite{gasteiger2019diffusion, gasteiger2018predict}, perform graph denoising prior to propagation~\cite{10.1145/3394486.3403049}, or rewire graphs using geometric criteria such as Ollivier--Ricci curvature~\cite{nguyen2023borf}. These methods alter the graph before or around message passing; HGODE instead introduces a latent topological potential that co-evolves with features and admits bistable states, enabling persistent edge polarization and structural memory during the continuous flow.


\section{Monostability Trap and Hysteretic Topology Dynamics}
\label{sec:monostable_trap}
\subsection{Notations}
\label{notations}
For a given graph $\mathcal{G}=\{\mathcal{V},\mathcal{E}\}$, we represent $\mathcal{V}=\{v_1, v_2, \dots, v_N\}$ and $\mathcal{E}=\{e_{ij} \mid v_i, v_j \in \mathcal{V}\}$ as the node set and edge set, respectively. For each node $v_i$, we define a $m$-dimensional feature vector $\mathbf{h}_i \in \mathbb{R}^m$, forming the global feature matrix $\mathbf{H} \in \mathbb{R}^{N\times m}$. 


Since our framework allows for asymmetric interactions, we define the degree matrix
$\mathbf{D}(t)=\mathrm{diag}(d_1(t),\dots,d_N(t))$ with $d_i(t)=\sum_j \widetilde A_{ij}(t)$,
where $\widetilde{\mathbf A}(t)=\mathbf A(t)+\epsilon \mathbf I$ adds a small self-loop for numerical stability.
The diffusion is governed by the row-stochastic matrix $\mathbf P(t)=\mathbf D(t)^{-1}\widetilde{\mathbf A}(t)$. For any weighted adjacency matrix $\mathbf{A}$, we denote its topological support
as $\mathcal{E}_{supp} \defeq \{(i,j): \mathbf{A}_{ij} > 0\}$.

For the proposed framework, we introduce a continuous latent potential matrix $\mathbf{U} \in \mathbb{R}^{N \times N}$. To ensure scalability, we restrict computations to a sparse candidate set $\mathcal{E}_{cand} \subset \mathcal{V}\times\mathcal{V}$ constructed once at initialization (\eg, 2-hop, Laplacian random-walk, and random neighbors), and define the effective active set $\mathcal{E}_{active}(t)\subseteq\mathcal{E}_{cand}$ by filtering edges whose effective weights become negligible during annealing. We denote the topological force function as $\mathcal{F}: \mathbb{R}^m \times \mathbb{R}^m \to \mathbb{R}$, and the structural annealing parameter as $\mu(t)$. Finally, $\tau_{feat}$ and $\tau_{topo}$ denote the timescale constants for feature and topological evolution, respectively.

All of the proofs can be found in the Appendix~\ref{sec:proofs}.

\subsection{The Monostability of Directed Consensus}

We analyze the asymptotic behavior of continuous-time GNNs. Unlike previous works that rely on undirected Dirichlet energy, we consider the general case of directed, non-symmetric interactions characteristic of global soft-attention mechanisms. Our analysis is grounded in \textit{consensus dynamics} on graphs with a
strictly positive weighted topology, as induced by the propagation operator.

\begin{definition}[Directed Diffusion Dynamics]
Let $\mathbf{P} = \mathbf{D}^{-1}\mathbf{A}$ be a (time-invariant) row-stochastic
transition matrix of the graph.
The evolution of node features $\mathbf{H}(t)$ is governed by the consensus flow:
\begin{equation}
    \frac{d\mathbf{H}}{dt} = -( \mathbf{I} - \mathbf{P} ) \mathbf{H}
\end{equation}
which describes a relaxation process where each node continuously migrates towards the convex hull of its neighbors.
\end{definition}

Note that while we begin with the time-invariant case for clarity, we later extend the analysis
to time-varying mixing operators, which are more representative of attention-based
Graph ODEs used in practice. We then give the assumption on the topology matrix:

\begin{assumption}[Strictly Positive Mixing on a Strongly Connected Support]
\label{ass:positivity}
Recall that $\mathcal{E}_{supp}:=\{(i,j):A_{ij}>0\}$ is the support of the propagation operator. 
We assume that (i) $A_{ij}>0$ on $\mathcal{E}_{supp}$ and (ii) the directed graph $(\mathcal{V},\mathcal{E}_{supp})$ is strongly connected. 
Equivalently, the row-stochastic matrix $\mathbf{P}=\mathbf{D}^{-1}\mathbf{A}$ is irreducible.
\end{assumption}

This assumption is satisfied by many dense soft-weight constructions (\eg, global attention or dense similarity kernels). Explicit hard masking/sparsification can break irreducibility, but such mechanisms are external to the continuous dynamics and do not address the long-time behavior in a principled manner. Under Assumption \ref{ass:positivity}, we invoke the Perron-Frobenius theorem to derive the inevitability of over-smoothing.

\begin{theorem}[Consensus Trap under Irreducible Positive Mixing]
\label{thm:monostability}
Under Assumption~\ref{ass:positivity}, the Markov matrix $\mathbf{P}$ admits a unique stationary distribution $\boldsymbol{\pi}\succ 0$ such that $\boldsymbol{\pi}^\top \mathbf{P}=\boldsymbol{\pi}^\top$ and $\boldsymbol{\pi}^\top \mathbf{1}=1$.
Moreover, the directed consensus flow
$\frac{d\mathbf{H}}{dt}=-(\mathbf{I}-\mathbf{P})\mathbf{H}$
converges to the rank-one consensus subspace:
\begin{equation}
\lim_{t\to\infty}\mathbf{H}(t)=\mathbf{1}\big(\boldsymbol{\pi}^\top \mathbf{H}(0)\big).
\end{equation}
The convergence rate is exponential and is governed by the spectral gap of $(\mathbf{I}-\mathbf{P})$, \ie, by $-\max_{k\ge 2}\mathrm{Re}(\lambda_k(\mathbf{P})-1)$.
\end{theorem}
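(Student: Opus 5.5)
The plan is to reduce everything to the spectral structure of $\mathbf P$ and then read off the continuous-time behavior from the matrix exponential $\mathbf H(t)=e^{-t(\mathbf I-\mathbf P)}\mathbf H(0)=e^{-t}e^{t\mathbf P}\mathbf H(0)$. First, we use that $\mathbf P$ is row-stochastic, so $\mathbf P\mathbf 1=\mathbf 1$ and $\|\mathbf P\|_\infty=1$. It follows that every eigenvalue satisfies $|\lambda|\le 1$ and the spectral radius equals $1$.

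Under Assumption~\ref{ass:positivity}, $\mathbf P$ is nonnegative and irreducible. The Perron--Frobenius theorem for irreducible matrices then gives:
\begin{itemize}
\item the eigenvalue $\rho(\mathbf P)=1$ is algebraically simple;
\item there is a left eigenvector $\boldsymbol\pi$ with all entries strictly positive, which we normalize by $\boldsymbol\pi^\top\mathbf 1=1$.
\end{itemize}
Uniqueness of the stationary distribution is exactly the simplicity of the eigenvalue $1$.

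Next we need the strict inequality $\mathrm{Re}\,\lambda_k<1$ for every $k\ge 2$. This is the one place where some care is needed. An irreducible $\mathbf P$ may be periodic, so it can have other eigenvalues on the unit circle, and in discrete time this would block convergence. In continuous time the issue disappears: any $\lambda$ with $|\lambda|\le1$ and $\lambda\neq 1$ has $\mathrm{Re}\,\lambda<1$, since $\mathrm{Re}\,\lambda=1$ together with $|\lambda|\le 1$ forces $\lambda=1$. The eigenvalue $1$ is simple, so every other eigenvalue satisfies this strict bound. (Alternatively, the self-loop $\epsilon\mathbf I$ in the notation already makes $\mathbf P$ aperiodic, but this is not needed.) Hence the eigenvalues of $-(\mathbf I-\mathbf P)$ are $0$, which is simple, together with values whose real parts are at most $-\gamma$, where $\gamma\defeq-\max_{k\ge2}\mathrm{Re}(\lambda_k(\mathbf P)-1)>0$.

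Then we decompose $\mathbb C^N$ into the one-dimensional eigenspace $\mathrm{span}(\mathbf 1)$ and the complementary invariant subspace $\ker\boldsymbol\pi^\top$. The corresponding spectral projector onto the eigenvalue-$1$ part is $\mathbf\Pi=\mathbf 1\boldsymbol\pi^\top$, which is valid because $\boldsymbol\pi^\top\mathbf 1=1$. This projector commutes with $\mathbf P$, so we can write
\[
e^{-t(\mathbf I-\mathbf P)}=\mathbf 1\boldsymbol\pi^\top+e^{-t(\mathbf I-\mathbf P)}(\mathbf I-\mathbf 1\boldsymbol\pi^\top).
\]
On the range of $\mathbf I-\mathbf\Pi$, the operator $-(\mathbf I-\mathbf P)$ has spectrum in $\{\mathrm{Re}\,z\le-\gamma\}$. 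We then bound the second term using the Jordan normal form. Each Jordan block of size $s$ contributes terms of the form $t^{j}e^{t(\lambda_k-1)}$ with $j<s$. This gives
\[
\|\mathbf H(t)-\mathbf 1\boldsymbol\pi^\top\mathbf H(0)\|\le C(1+t^{N-1})e^{-\gamma t}\|\mathbf H(0)\|,
\]
which is exponential decay at rate $\gamma'$ for any $\gamma'<\gamma$, and at rate exactly $\gamma$ when $\mathbf P$ is diagonalizable.

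As a sanity check, the quantity $\boldsymbol\pi^\top\mathbf H(t)$ is conserved, because $\tfrac{d}{dt}\boldsymbol\pi^\top\mathbf H=-\boldsymbol\pi^\top(\mathbf I-\mathbf P)\mathbf H=0$. This identifies the limit as $\mathbf 1(\boldsymbol\pi^\top\mathbf H(0))$.

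The main obstacles I expect are twofold:
\begin{itemize}
\item the periodicity issue, handled by the real-part argument above;
\item the possibly non-diagonalizable (non-normal) structure of a directed $\mathbf P$, handled by accepting polynomial prefactors in the bound.
\end{itemize}
Because of the second point, I would state the rate as ``governed by'' the spectral gap, in the sense of the exponent, rather than as a sharp constant.
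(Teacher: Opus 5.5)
Your proposal is correct and follows essentially the same route as the paper: write $\mathbf H(t)=e^{-t(\mathbf I-\mathbf P)}\mathbf H(0)$, invoke Perron--Frobenius, split off the dominant mode $\mathbf 1\boldsymbol\pi^\top$, and show that the remaining modes decay. Your version is tighter in two places the paper glosses over. First, the paper asserts $|\lambda_i|<1$ for $i\ge 2$, which fails for periodic irreducible $\mathbf P$ unless the $\epsilon\mathbf I$ self-loop is present; your $\mathrm{Re}\,\lambda<1$ argument is what actually holds. Second, the paper assumes diagonalizability for exposition, whereas you explicitly account for the polynomial Jordan prefactors.
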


Theorem~\ref{thm:monostability} shows that irreducible positive mixing forces the dynamics into a single global attractor, hence long-time information leakage is unavoidable. Escaping this trap requires altering the asymptotic mixing structure—most notably, breaking irreducibility so that the effective propagation becomes reducible (\eg, block-diagonal up to permutation), which yields multiple invariant subspaces corresponding to different clusters.

\begin{figure*}[t]
    \centering
    \includegraphics[width=\textwidth]{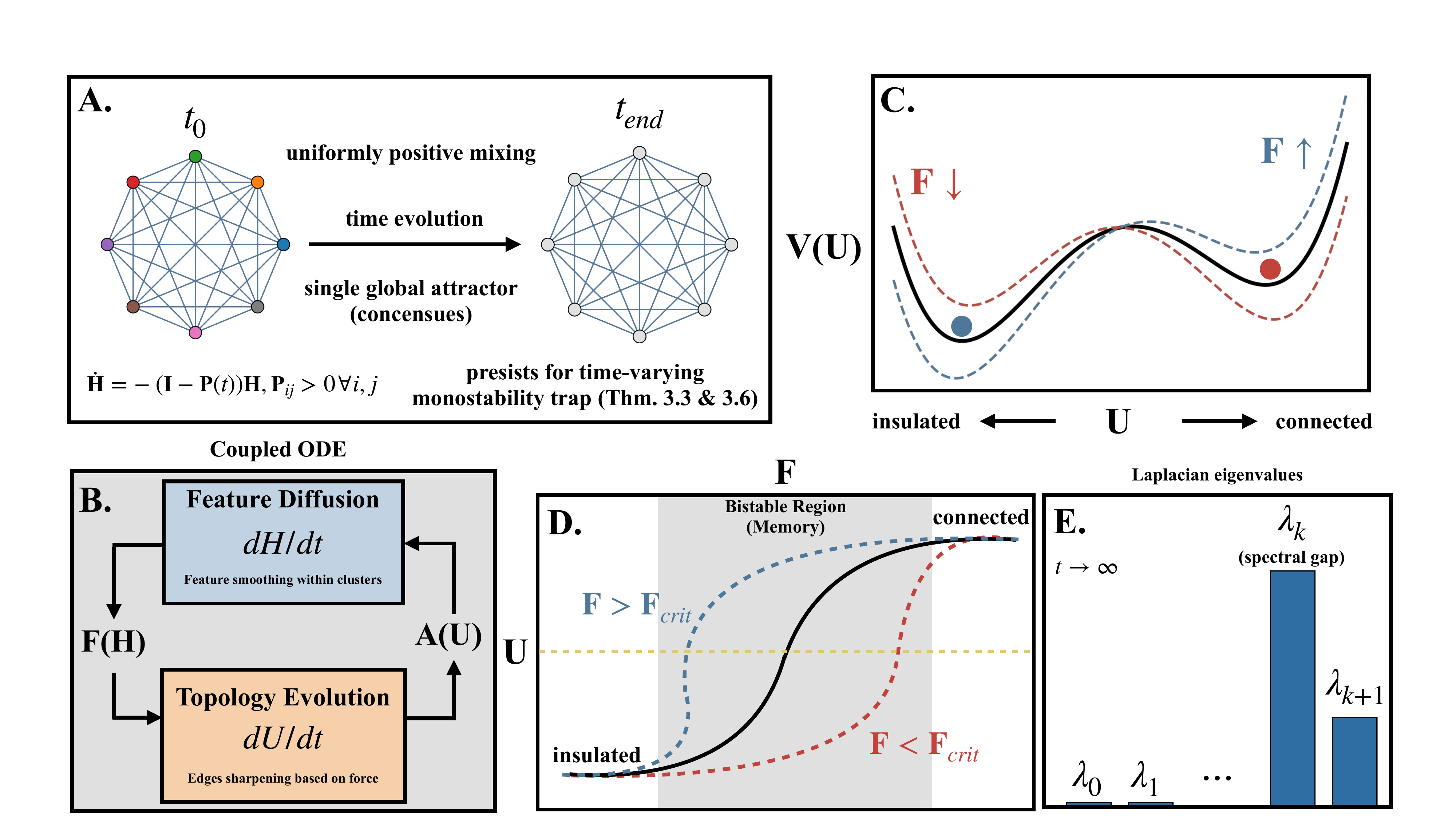}
\caption{\textbf{Hysteresis Graph ODE (HGODE) controls asymptotic mixing.}
(A) \textbf{Monostability trap:} diffusion-style Graph ODEs with uniformly positive (including time-varying) mixing collapse to a single consensus attractor in the long-time regime.
(B) \textbf{Coupled dynamics:} HGODE jointly evolves node features and latent edge potentials, treating topology as a dynamical state.
(C) \textbf{Hysteresis:} a double-well landscape with a critical threshold induces bistability and structural memory, polarizing edges into connected versus insulated phases.
(D) \textbf{Force--potential polarization:} the learned force $\mathcal{F}$ drives edge potentials $\textbf{U}$ into connected or insulated phases.
(E) \textbf{Asymptotic outcome:} edge polarization can move propagation toward block-structured mixing, yielding cluster-wise invariant subspaces (\eg., multiple near-zero Laplacian eigenvalues) instead of global consensus.}
    \label{fig:framework}
\end{figure*}

\subsection{Beyond time-invariant mixing}
Theorem~\ref{thm:monostability} establishes the monostability trap for time-invariant propagation operators.
However,  many diffusion-style Graph ODEs induce
mixing operators that vary with time through their dependence on node features.
We next show that the consensus trap persists under a broad class of time-varying
row-stochastic operators. We start from an assumption for time-varying topology $\textbf{P}(t)$:


\begin{assumption}[Uniform Positivity]
\label{ass:uniform_pos}
Let $\mathbf{P}(t)$ be a time-varying row-stochastic matrix. We assume that there exists a constant
$\alpha>0$ (necessarily $\alpha \le 1/N$) such that:
(i) $\mathbf{P}(t)$ is piecewise continuous in $t$;
(ii) (Uniform Positivity) $\mathbf{P}_{ij}(t) \ge \alpha$ for all $i,j\in\{1,\dots,N\}$ and all $t \ge 0$.
\end{assumption}

\begin{lemma}[Window Contraction]
\label{lemma:window_contraction}
Under Assumption~\ref{ass:uniform_pos}, there exists a contraction rate $\rho \in (0,1)$ such that for any feature signal $\mathbf{x}(t) \in \mathbb{R}^N$ governed by $\dot{\mathbf{x}}(t) = -(\mathbf{I} - \mathbf{P}(t))\mathbf{x}(t)$, the following holds:
\[
\mathrm{diam}(\mathbf{x}(t+T_w)) \le (1-\rho)\,\mathrm{diam}(\mathbf{x}(t)), \quad \forall t \ge 0,
\]
where $\mathrm{diam}(\mathbf{x}) \defeq \max_i x_i - \min_i x_i$.
\end{lemma}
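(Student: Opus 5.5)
We will prove the lemma by rewriting the flow as a time-varying Markov evolution and applying a Dobrushin-type contraction estimate. The window length $T_w>0$ is arbitrary but fixed (e.g.\ $T_w=1$), and $\rho$ will depend only on $\alpha$ and $T_w$.

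\textbf{Step 1 (evolution operator).} Since $\mathbf{P}(t)$ is piecewise continuous and bounded, the linear ODE $\dot{\mathbf{x}}=-(\mathbf{I}-\mathbf{P}(t))\mathbf{x}$ has a well-defined state-transition matrix $\Phi(t+T_w,t)$ with $\mathbf{x}(t+T_w)=\Phi(t+T_w,t)\mathbf{x}(t)$. Substituting $\mathbf{x}(s)=e^{-(s-t)}\mathbf{y}(s)$ gives $\dot{\mathbf{y}}=\mathbf{P}(s)\mathbf{y}$, and so
\[
\Phi(t+T_w,t)=e^{-T_w}\,\Psi(t+T_w,t).
\]
Here $\Psi$ is the transition matrix of $\dot{\mathbf{y}}=\mathbf{P}\mathbf{y}$. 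Expanding $\Psi$ as a Peano--Baker series shows it is entrywise nonnegative, because $\mathbf{P}\ge 0$. Since $\mathbf{P}\mathbf{1}=\mathbf{1}$, the vector $\mathbf{1}$ is a solution of the original flow, so $\Phi\mathbf{1}=\mathbf{1}$. Hence $\Phi$ is row-stochastic.

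\textbf{Step 2 (uniform entrywise lower bound).} This is the key step. Keep the first-order term of the series:
\[
\Psi(t+T_w,t)\ \ge\ \mathbf{I}+\int_t^{t+T_w}\mathbf{P}(s)\,ds\ \ge\ T_w\alpha\,\mathbf{1}\mathbf{1}^\top
\]
entrywise. The dropped terms are nonnegative, which justifies the first inequality; the second follows from Assumption~\ref{ass:uniform_pos}(ii). Consequently
\[
\Phi_{ij}(t+T_w,t)\ \ge\ \beta\defeq \alpha T_w e^{-T_w}>0
\]
for all $i,j$, uniformly in $t$. If one prefers to avoid the series, the same bound follows from a Duhamel argument: since $\Phi(s,t)\ge 0$ is row-stochastic and $\Phi$ solves $\partial_s\Phi=-\Phi+\mathbf{P}\Phi$, its entries satisfy $\partial_s\Phi_{ij}\ge-\Phi_{ij}+\alpha$, and integrating this inequality gives $\Phi_{ij}(t+T_w,t)\ge\alpha(1-e^{-T_w})$. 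The only technical care needed in this step is the piecewise continuity of $\mathbf{P}(t)$, which we handle by concatenating the flows on the intervals of continuity.

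\textbf{Step 3 (contraction of the diameter).} For any row-stochastic matrix $\mathbf{M}$ with all entries at least $\beta$, a standard coupling argument gives $\mathrm{diam}(\mathbf{M}\mathbf{x})\le(1-N\beta)\,\mathrm{diam}(\mathbf{x})$. To see this, write $m=\min_k x_k$ and $M_x=\max_k x_k$, and for any two rows $i,l$ compute
\[
(\mathbf{M}\mathbf{x})_i-(\mathbf{M}\mathbf{x})_l=\sum_j(M_{ij}-M_{lj})x_j .
\]
The positive part of the coefficient vector $(M_{ij}-M_{lj})_j$ has total mass $1-\sum_j\min(M_{ij},M_{lj})\le 1-N\beta$, and the coefficients sum to zero. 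Therefore the difference is at most $(1-N\beta)(M_x-m)$.

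Applying this with $\mathbf{M}=\Phi(t+T_w,t)$ yields the lemma with
\[
\rho=N\beta=N\alpha T_w e^{-T_w}.
\]
This lies in $(0,1)$: it is positive because $\alpha>0$, and it is strictly below $1$ because $N\alpha\le 1$ and $T_we^{-T_w}\le e^{-1}<1$. The constant is uniform in $t\ge 0$ because the bound $\beta$ in Step 2 does not depend on $t$.
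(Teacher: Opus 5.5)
Your proof is correct, but it takes a different route from the paper. The paper never forms the flow map. It tracks a maximizer $i^+$ and a minimizer $i^-$ and uses only the single entries $\mathbf{P}_{i^+j^-}(t),\mathbf{P}_{i^-j^+}(t)\ge\alpha$. This gives $\dot M\le-\alpha(M-m)$ and $\dot m\ge\alpha(M-m)$ for a.e.\ $t$, hence the instantaneous estimate $\mathrm{diam}(\mathbf{x}(t))\le e^{-2\alpha t}\,\mathrm{diam}(\mathbf{x}(0))$, from which it reads off $\rho=1-e^{-2\alpha T_w}$.

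\textbf{What each approach buys.}
\begin{itemize}
\item The paper's argument is shorter. It gives exponential decay on every time scale at once, which is exactly what the subsequent theorem uses ($C=1$, $\kappa=2\alpha$). The cost is the a.e.-differentiability bookkeeping for $\max_i x_i$ and $\min_i x_i$, and it uses positivity pointwise in time.
\item Your route avoids nonsmooth analysis: the Peano--Baker series gives a nonnegative row-stochastic $\Phi$, you obtain a uniform entrywise lower bound, and you finish with the Dobrushin coefficient.
\item Step~2 only needs $\int_t^{t+T_w}\mathbf{P}(s)\,ds$ to be uniformly entrywise positive. So your argument survives intermittent positivity, which is the natural setting for a window-type lemma.
\item Step~3 uses all $N$ entries of a row, so iterating over short windows yields a decay rate of order $N\alpha$ rather than $2\alpha$.
\item On the other hand, for a single long window your $\rho=N\alpha T_we^{-T_w}\le e^{-1}$ saturates, whereas the paper's $1-e^{-2\alpha T_w}\to 1$.
\end{itemize}

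\textbf{A slip in the optional Duhamel aside.} From $\partial_s\Phi=-\Phi+\mathbf{P}\Phi$ you only get $(\mathbf{P}\Phi)_{ij}=\sum_k\mathbf{P}_{ik}\Phi_{kj}\ge\alpha\sum_k\Phi_{kj}$. That is a \emph{column} sum of $\Phi$, which need not be at least $1$, so the inequality $\partial_s\Phi_{ij}\ge-\Phi_{ij}+\alpha$ is not justified as written.

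The bound $\alpha(1-e^{-T_w})$ is nevertheless true. Differentiate in the initial time instead: $\partial_t\Phi(s,t)=\Phi(s,t)(\mathbf{I}-\mathbf{P}(t))$. There $(\Phi\mathbf{P})_{ij}\ge\alpha\sum_k\Phi_{ik}=\alpha$ uses the row sums, and integrating backward from $\Phi(s,s)=\mathbf{I}$ gives the claim. Your main Peano--Baker argument does not depend on this aside, so the proof stands.
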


Then, we give the theorem for time-varying mixing:
\begin{theorem}[Consensus trap under time-varying positive mixing]
\label{thm:time_varying_consensus}
Consider the time-varying consensus flow
\[
\frac{d\textbf{H}(t)}{dt} = -(\textbf{I} - \textbf{P}(t)) \textbf{H}(t),
\]
where $P(t)$ satisfies Assumption~\ref{ass:uniform_pos}.
Then the dynamics converge exponentially to the rank-one consensus subspace.
Specifically, for each feature dimension $k$, there exist constants $C,\kappa>0$ such that
\[
\mathrm{diam}(\textbf{H}_{\cdot k}(t)) \le C e^{-\kappa t}\,\mathrm{diam}(\textbf{H}_{\cdot k}(0)).
\]
Equivalently, there exists $y(t)\in\mathbb{R}^m$ such that
\[
\lim_{t\to\infty}\|\textbf{H}(t)-\mathbf{1}y(t)^\top\|_F = 0.
\]
\end{theorem}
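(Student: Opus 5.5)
The plan is to reduce Theorem~\ref{thm:time_varying_consensus} to Lemma~\ref{lemma:window_contraction}, applied column by column, and then chain the one-window contractions. The dynamics act identically and independently on each feature column: writing $\mathbf{x}(t)=\textbf{H}_{\cdot k}(t)$, we have $\dot{\mathbf{x}}=-(\mathbf{I}-\mathbf{P}(t))\mathbf{x}$. Piecewise continuity in Assumption~\ref{ass:uniform_pos} guarantees a unique absolutely continuous (Carath\'eodory) solution, so the lemma applies to each column.

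First I record that $\mathrm{diam}(\mathbf{x}(t))$ is nonincreasing in $t$.
\begin{itemize}
\item Let $M(t)=\max_i x_i(t)$. At an index $i$ attaining the maximum, $\dot x_i=\sum_j P_{ij}(t)(x_j-x_i)\le 0$, since $\mathbf{P}(t)$ is row-stochastic and nonnegative.
\item A standard Dini-derivative argument then gives that $M$ is nonincreasing.
\item Symmetrically, $m(t)=\min_i x_i(t)$ is nondecreasing, so the diameter is nonincreasing.
\end{itemize}

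Next I chain the window bound. Fix $t\ge0$ and write $t=nT_w+s$ with $s\in[0,T_w)$ and $n=\lfloor t/T_w\rfloor$. Applying Lemma~\ref{lemma:window_contraction} successively at times $0,T_w,\dots,(n-1)T_w$, and then using monotonicity on the remainder $s$, gives
\[
\mathrm{diam}(\mathbf{x}(t))\le(1-\rho)^n\,\mathrm{diam}(\mathbf{x}(0)).
\]
Set $\kappa=-\ln(1-\rho)/T_w>0$ and $C=(1-\rho)^{-1}$. Since $n\ge t/T_w-1$, we get $(1-\rho)^n\le C e^{-\kappa t}$, which is the claimed bound. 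The constants depend only on $\alpha$, $N$ and $T_w$, so they are in fact uniform over the columns $k$.

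For the equivalent Frobenius statement, I take $y_k(t)=\min_i H_{ik}(t)$, or any convex combination of the entries of column $k$. Then every entry satisfies $|H_{ik}(t)-y_k(t)|\le\mathrm{diam}(\textbf{H}_{\cdot k}(t))$. Summing over the $N$ rows and $m$ columns gives
\[
\|\textbf{H}(t)-\mathbf{1}y(t)^\top\|_F\le\sqrt{Nm}\,\max_k\mathrm{diam}(\textbf{H}_{\cdot k}(t))\to0.
\]
As a bonus, the monotone bounded limits of $M$ and $m$ coincide, so $y(t)$ actually converges to a fixed consensus vector.

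The main obstacle is Lemma~\ref{lemma:window_contraction} itself, which may be invoked here but underpins the rate. If it had to be reproved, I would use the Duhamel form
\[
\mathbf{x}(t+T_w)=e^{-T_w}\mathbf{x}(t)+\int_t^{t+T_w}e^{-(t+T_w-s)}\mathbf{P}(s)\mathbf{x}(s)\,ds.
\]
Uniform positivity gives $\mathbf{P}(s)\mathbf{x}(s)\ge \alpha N\, m(s)+(1-\alpha N)M(s)$-type bounds that pull every coordinate at least a fraction $\alpha(1-e^{-T_w})$ away from the extremes. This yields $\rho=\alpha(1-e^{-T_w})$, or a constant of that order.

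Within the present proof, the only delicate step is the monotonicity of the extremes under mere piecewise continuity. I would handle it with upper Dini derivatives of the maximum over a finite family of absolutely continuous functions.
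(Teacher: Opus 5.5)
Your proof is correct, but it is organized differently from the paper's. The paper does not chain windows. In the appendix (Lemma~\ref{lem:instant_contraction}) it proves the pointwise differential inequality $\frac{d}{dt}\mathrm{diam}(\mathbf{x}(t))\le-2\alpha\,\mathrm{diam}(\mathbf{x}(t))$, by bounding $\dot M\le-\alpha(M-m)$ at a maximizer and $\dot m\ge\alpha(M-m)$ at a minimizer. This gives the theorem immediately with $C=1$ and $\kappa=2\alpha$, and the window lemma is then just a corollary of that inequality.

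Because you treat the window lemma as a black box, you need the extra monotonicity step to cover the gap between window endpoints. You also pay a constant $C=(1-\rho)^{-1}$, and your rate $\kappa=-\ln(1-\rho)/T_w$ is only as good as the window constant. In exchange, your argument is modular. It goes through unchanged whenever some one-window contraction is available, for instance under positivity or joint connectivity that holds only on average over windows, where the paper's pointwise inequality is unavailable. Your monotonicity observation also gives something the paper does not state: $M(t)$ and $m(t)$ share a limit, so $y(t)$ converges to a fixed consensus vector. In the paper, $y(t)$ is simply the first row of $\mathbf{H}(t)$.

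There is one slip in your optional sketch of the window lemma. The lower bound should be $(\mathbf{P}(s)\mathbf{x}(s))_i\ge\alpha M(s)+(1-\alpha)m(s)$; as you wrote it, with $m$ and $M$ swapped, it is false. With the corrected bound, Duhamel plus monotonicity of the diameter gives $\mathrm{diam}(\mathbf{x}(t+T_w))\le\mathrm{diam}(\mathbf{x}(t))/\bigl(1+2\alpha(1-e^{-T_w})\bigr)$. Using $\alpha\le 1/N$, this yields $\rho\ge\alpha(1-e^{-T_w})$, as you claimed. Since you invoke the lemma rather than reprove it, the slip does not affect your main proof.
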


Naturally, while defining $\textbf{P}$ as the soft-attention, we have:

\begin{corollary}[Dense soft-attention induces the consensus trap]
\label{cor:dense_sa}
Let $P(t)$ be defined by fully-connected softmax attention,
\[
\textbf{P}_{ij}(t) \defeq \frac{\exp(\langle \textbf{h}_i(t),\textbf{h}_j(t)\rangle/\tau)}
{\sum_k \exp(\langle \textbf{h}_i(t),\textbf{h}_k(t)\rangle/\tau)}.
\]
Assume that node features remain bounded, \ie, $\|h_i(t)\|\le B$ for all $i,t$.
Then $\textbf{P}(t)$ satisfies Assumption~\ref{ass:uniform_pos} with
\[
\textbf{P}_{ij}(t)\ge \frac{1}{N}\exp\!\Big(-\frac{2B^2}{\tau}\Big).
\]
Consequently, attention-based Graph ODEs with dense soft mixing inevitably
converge to the consensus subspace in the long-time regime.
\end{corollary}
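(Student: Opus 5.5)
The plan is to check the two conditions of Assumption~\ref{ass:uniform_pos} for the softmax operator and then invoke Theorem~\ref{thm:time_varying_consensus}. Row-stochasticity holds by construction, since each row of $\mathbf{P}(t)$ is a softmax over $k$ and therefore has positive entries summing to one.

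\textbf{Step 1: piecewise continuity.} Along a solution of the Graph ODE, the trajectory $t\mapsto \mathbf{h}_i(t)$ is continuous; it is even absolutely continuous if the vector field is only piecewise continuous in $t$. The map from features to $\mathbf{P}$ is a composition of inner products, exponentials and a quotient whose denominator is strictly positive. This map is smooth, so $\mathbf{P}(t)$ is continuous in $t$, which gives condition (i).

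\textbf{Step 2: uniform lower bound.} By Cauchy--Schwarz and the bound $\|\mathbf{h}_i(t)\|\le B$, every score satisfies $|\langle \mathbf{h}_i,\mathbf{h}_j\rangle|\le B^2$. Each numerator is therefore at least $e^{-B^2/\tau}$. The denominator is a sum of $N$ terms, each at most $e^{B^2/\tau}$, so it is at most $N e^{B^2/\tau}$. Dividing gives
\[
\mathbf{P}_{ij}(t)\ge \frac{e^{-B^2/\tau}}{N e^{B^2/\tau}}=\frac{1}{N}\exp\!\Big(-\frac{2B^2}{\tau}\Big)\eqqcolon\alpha .
\]
This bound holds for all $i,j$ and all $t\ge 0$. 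It also satisfies $\alpha\le 1/N$, which is consistent with the parenthetical remark in Assumption~\ref{ass:uniform_pos}.

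\textbf{Step 3: conclusion.} With both conditions in place, Theorem~\ref{thm:time_varying_consensus} applies to the flow $\dot{\mathbf{H}}=-(\mathbf{I}-\mathbf{P}(t))\mathbf{H}$. Each feature column therefore contracts to consensus exponentially.

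The only delicate point is interpretive rather than computational. In an attention-based Graph ODE, $\mathbf{P}(t)$ depends on $\mathbf{H}(t)$, so the system is nonlinear. Theorem~\ref{thm:time_varying_consensus} is stated for a given time-varying operator, and it only needs $\mathbf{P}(t)$ to satisfy the assumption along the trajectory. I will handle this by fixing the solution $\mathbf{H}(t)$ and treating $\mathbf{P}(t)\defeq\mathbf{P}(\mathbf{H}(t))$ as an exogenous time-varying operator. Under this view the nonlinear trajectory is also a solution of the linear time-varying system. The boundedness hypothesis $\|\mathbf{h}_i(t)\|\le B$ makes the lower bound in Step 2 uniform in time, which is exactly what the window-contraction argument of Lemma~\ref{lemma:window_contraction} requires.
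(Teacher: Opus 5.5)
Your proposal is correct and follows the paper's proof. Both bound the scores by $B^2$ via Cauchy--Schwarz, take the numerator $\ge e^{-B^2/\tau}$ and the denominator $\le N e^{B^2/\tau}$, and then apply Theorem~\ref{thm:time_varying_consensus}. Your extra steps (continuity of $\mathbf{P}(t)$ along the trajectory, and treating $\mathbf{P}(\mathbf{H}(t))$ as an exogenous time-varying operator) just make explicit what the paper leaves implicit.
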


The boundedness assumption is mild in practice and is typically supported by architectural normalization, regularization, and the stability of numerical ODE solvers. Increasing the attention temperature $\tau$ enlarges the uniform
positivity constant, leading to faster convergence to consensus. Nevertheless, this result targets dense soft mixing (full support). If explicit sparsification/masking is applied, uniform positivity may fail and the long-time behavior depends on the induced reducible support.

\subsection{Hysteresis via Double-Well Potentials}
\label{subsec:hysteresis}


To prevent collapse to a single attracting basin, the topology dynamics must be \emph{bistable}.
We model each directed edge $e_{ij}$ via a latent order parameter $u_{ij}(t)$ evolving on a non-convex
Landau energy landscape with two stable structural phases: \emph{Connected} and \emph{Insulated}, rather
than as a directly optimized propagation weight.
This is a binary decision at the edge level: an interaction should either support mixing or suppress it.
Multi-class or multi-cluster structure is then represented by the joint pattern of many binary edge states,
rather than by requiring one local attractor per class for each edge. Appendix~\ref{app:bistability_candidate_pool}
expands this multi-edge interpretation.

\begin{definition}[Landau Double-Well Potential for Connectivity]
\label{def:landau}
Let $u_{ij}\in\mathbb{R}$ be the latent order parameter associated with the directed edge $e_{ij}$.
Given the node features $\mathbf{h}_i, \mathbf{h}_j$, we define the driving force $\mathcal{F}_{ij}(t) \defeq \mathcal{F}(\mathbf{h}_i(t), \mathbf{h}_j(t))$.
We define the local Landau potential (energy) for each edge as
\begin{equation}
\label{eq:landau_potential}
V(u_{ij}; \mathcal{F}_{ij})
\;\defeq\;
\underbrace{\frac{1}{4}u_{ij}^4 - \frac{1}{2}u_{ij}^2}_{\text{intrinsic bistability}}
\;-\;
\underbrace{\mathcal{F}_{ij}\,u_{ij}\vphantom{\frac{1}{4}u}}_{\text{coupling field}},
\end{equation}
where the intrinsic term yields two degenerate minima at $u_{ij}=\pm 1$ in the absence of external drive ($\mathcal{F}_{ij}=0$),
and the topological force $\mathcal{F}_{ij}$ acts as an external bias that breaks the symmetry.
\end{definition}

\textbf{Gradient-flow dynamics.}
We evolve the edge state by gradient descent on the potential:
\begin{equation}
\label{eq:bistable_dynamics}
\frac{du_{ij}}{dt}
\;=\;
-\frac{\partial V(u_{ij};\mathcal{F}_{ij})}{\partial u_{ij}}
\;=\;
u_{ij}-u_{ij}^3+\mathcal{F}_{ij}.
\end{equation}
The cubic nonlinearity in~\eqref{eq:bistable_dynamics} induces \textbf{hysteresis}:
the state of an edge depends not only on the instantaneous force $\mathcal{F}_{ij}$ but also on
its history (\ie, which well it currently occupies), thereby providing a form of structural memory.

\begin{proposition}[Bistability, saddle-node transitions, and hysteresis]
\label{prop:hysteresis}
For each edge $(i,j)$, the equilibria of~\eqref{eq:bistable_dynamics} are given by the roots of:
\begin{equation}
\label{eq:cubic_fixed_points}
u_{ij}^3 - u_{ij} = \mathcal{F}_{ij}
\end{equation}
Define the critical force magnitude:
\begin{equation}
\label{eq:fcrit}
\mathcal{F}_{\mathrm{crit}} \;\defeq\; \frac{2}{3\sqrt{3}}.
\end{equation}
Then the system undergoes \emph{saddle-node (fold) bifurcations} at
$\mathcal{F}_{ij}=\pm \mathcal{F}_{\mathrm{crit}}$, with the following regimes:
\begin{enumerate}
\item \textbf{Bistable regime ($|\mathcal{F}_{ij}|<\mathcal{F}_{\mathrm{crit}}$).}
Eq.~\ref{eq:cubic_fixed_points} admits two stable equilibria (attractors) and one unstable equilibrium (repeller). Consequently, an edge initialized in the \emph{Connected} well
($u_{ij}\approx +1$) resists switching to \emph{Insulated} even if $\mathcal{F}_{ij}$ decreases moderately,
and vice versa, yielding robustness to noise and transient perturbations.

\item \textbf{Monostable regime ($|\mathcal{F}_{ij}|>\mathcal{F}_{\mathrm{crit}}$).}
Only one real fixed point remains. The barrier disappears via a saddle-node transition, forcing a rapid,
deterministic phase change to the single remaining attractor.
\end{enumerate}
\end{proposition}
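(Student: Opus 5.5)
The plan is to treat the right-hand side of~\eqref{eq:bistable_dynamics} as a one-parameter family of scalar vector fields. Write $g(u)\defeq u-u^3$, so that $\dot u = g(u)+\mathcal{F}$ with $\mathcal{F}$ frozen. Equilibria are then exactly the solutions of $g(u)=-\mathcal{F}$, which is~\eqref{eq:cubic_fixed_points}. It is convenient to study $\phi(u)\defeq u^3-u$, whose level sets $\phi(u)=\mathcal{F}$ give the equilibria.

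\emph{Root count.} Since $\phi'(u)=3u^2-1$, the critical points are at $u_\pm=\pm 1/\sqrt{3}$. At these points
\[
\phi(u_\pm)=\pm\Big(\tfrac{1}{3\sqrt3}-\tfrac{1}{\sqrt3}\Big)=\mp\tfrac{2}{3\sqrt3}=\mp\mathcal{F}_{\mathrm{crit}},
\]
so $u_-=-1/\sqrt3$ is a local maximum with value $\mathcal{F}_{\mathrm{crit}}$ and $u_+$ is a local minimum with value $-\mathcal{F}_{\mathrm{crit}}$. Because $\phi$ is strictly monotone on each of $(-\infty,u_-]$, $[u_-,u_+]$ and $[u_+,\infty)$, the intermediate value theorem gives:
\begin{itemize}
\item three distinct real roots when $|\mathcal{F}|<\mathcal{F}_{\mathrm{crit}}$, one in each interval;
\item exactly one root when $|\mathcal{F}|>\mathcal{F}_{\mathrm{crit}}$, lying in an outer branch;
\item a double root at $u_\mp$ when $\mathcal{F}=\pm\mathcal{F}_{\mathrm{crit}}$.
\end{itemize}
As a cross-check, the discriminant of $u^3-u-\mathcal{F}$ is $4-27\mathcal{F}^2$, which changes sign exactly at $\mathcal{F}=\pm\mathcal{F}_{\mathrm{crit}}$.

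\emph{Stability.} Linearizing gives $\partial_u(u-u^3+\mathcal{F})=1-3u^2=-\phi'(u)$. Roots on the outer branches satisfy $|u|>1/\sqrt3$, so the linearization is negative and they are hyperbolic attractors. The middle root satisfies $|u|<1/\sqrt3$, so it is a repeller. Since the system is a scalar gradient flow on $V$, this also matches the picture of two wells separated by a barrier (local maximum of $V$). In the monostable regime the single remaining root lies on an outer branch, so it is stable and globally attracting: $V$ is coercive and has a unique critical point, so every trajectory converges to it. The ``rapid, deterministic'' phase change is the statement that an initial state in the vanished well must flow to this unique attractor.

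\emph{Saddle-node and hysteresis.} To certify the fold at $(u,\mathcal{F})=(u_\mp,\pm\mathcal{F}_{\mathrm{crit}})$, I will check the standard nondegeneracy conditions for $f(u,\mathcal{F})=u-u^3+\mathcal{F}$:
\[
f=0,\qquad \partial_u f=0,\qquad \partial_u^2 f=-6u\neq 0,\qquad \partial_{\mathcal{F}} f=1\neq 0.
\]
The qualitative robustness claim follows from the structure of the branches. Suppose an edge sits on the upper branch ($u>1/\sqrt3$, near $+1$). As $\mathcal{F}$ decreases quasi-statically but stays above $-\mathcal{F}_{\mathrm{crit}}$, the implicit function theorem lets this branch continue smoothly, since $\phi'\neq 0$ there. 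The state therefore remains Connected and only leaves at the fold $\mathcal{F}=-\mathcal{F}_{\mathrm{crit}}$. The symmetric argument applies to the Insulated branch. The two switching thresholds differ, which is exactly hysteresis.

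The main obstacle is interpretive rather than computational: the statement treats $\mathcal{F}_{ij}$ as a parameter, but in the model it depends on the evolving features. I would state the result for frozen $\mathcal{F}$ and, for robustness, note a slow-variation caveat: if $\mathcal{F}$ varies slowly and stays at distance $\delta$ from $\pm\mathcal{F}_{\mathrm{crit}}$, the attractor is tracked up to an $O(\dot{\mathcal{F}})$ error, by a standard adiabatic or Tikhonov-type argument.
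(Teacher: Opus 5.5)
Your proof is correct and follows the paper's route. The paper also locates the fold by imposing the double-root conditions $u^3-u-\mathcal{F}=0$ and $3u^2-1=0$, which gives $u=\pm 1/\sqrt{3}$ and $\mathcal{F}=\pm\mathcal{F}_{\mathrm{crit}}$, exactly your evaluation of $\phi$ at its critical points; it then defers the regime and stability claims to ``standard one-dimensional bifurcation analysis,'' which you fill in with the monotonicity and discriminant root count, the linearization $1-3u^2$, the fold nondegeneracy conditions, and the implicit-function continuation of the outer branches.
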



\textbf{From latent states to effective propagation.}
In our model, each latent potential $u_{ij}(t)$ is converted into an effective propagation weight $A_{ij}(t)\in[0,1]$ using a monotone gate (\eg, a temperature-controlled sigmoid), so that the Connected well maps to high weight while the Insulated well maps to near-zero weight; Sec.~\ref{sec:dynamics} gives the formal definition.
The gate is applied on the candidate set $\mathcal{E}_{cand}$, with annealing used to suppress negligible connections. Near the transition region $|\mathcal{F}_{ij}|\approx \mathcal{F}_{\mathrm{crit}}$, saddle-node transitions can make the vector field change rapidly, so we use adaptive ODE solvers (\texttt{dopri5}) for stable integration. Fixed-step solvers (Euler/RK4) may require prohibitively small step sizes to accurately resolve switching events.

\section{The Hysteresis Graph ODE Framework}
\label{sec:framework}

We now instantiate the coupled feature--topology ODE designed to alter the long-time mixing structure analyzed in Sec.~\ref{sec:monostable_trap}. The overview of the proposed HGODE is shown in Figure~\ref{fig:framework}. We illustrate the details in the following sections.

\subsection{Coupled Evolutionary Dynamics}
\label{sec:dynamics}

HGODE governs the co-evolution of node features $\mathbf{H}(t)$ and latent edge potentials
$\mathbf{U}(t)$ on a chosen candidate pool $\mathcal{E}_{cand}$.
The coupled dynamics are defined as:
\begin{equation}
\begin{cases}
\begin{aligned}
\tau_{feat}\,\dfrac{d\mathbf{H}(t)}{dt}
&=
\mathcal{G}_{\phi}\!\big(\mathbf{H}(t), \mathbf{A}(t)\big)
-\gamma\,\mathbf{H}(t),
\\[6pt]
\tau_{topo}\,\dfrac{d\mathbf{U}(t)}{dt}
&=
(1-\lambda)\,\mathbf{U}(t)
-\mathbf{U}(t)^{3}
+\mathcal{F}_{\theta}\!\big(\mathbf{H}(t)\big),
\end{aligned}
\end{cases}
\label{eq:hgode_system}
\end{equation}
where $\mathcal{G}_{\phi}$ is a graph neural operator, $\gamma\ge 0$ is a feature decay coefficient,
and $\lambda\in[0,1)$ controls the depth of the double-well potential.

\textbf{Force field.}
$\mathcal{F}_{\theta}(\mathbf{H}(t))\in\mathbb{R}^{N\times N}$ is a matrix-valued force field, whose
entries depend on feature compatibility:
\[
\big(\mathcal{F}_{\theta}(\mathbf{H})\big)_{ij} \;=\; \mathcal{F}_{\theta}(\mathbf{h}_i,\mathbf{h}_j).
\]
Its parameterization and the margin-inducing training objective are specified in
Sec.~\ref{sec:force_margin}.

\textbf{From latent potentials to effective propagation.}
The effective propagation weights are obtained by a gated mapping of $\mathbf{U}(t)$ and restricted
to the candidate pool:
\begin{equation}
\label{eq:A_from_U}
\mathbf{A}_{ij}(t)
\;=\;
\sigma\!\left(\frac{\mathbf{U}_{ij}(t)}{\tau}\right)\cdot \mu(t)\cdot
\mathbf{1}\big[(i,j)\in\mathcal{E}_{cand}\big],
\end{equation}
where $\sigma(\cdot)$ is the sigmoid gate with temperature $\tau>0$, and $\mu(t)$ is the structural
annealing schedule used for filtering negligible connections. The pool size is a computation--coverage
tradeoff: with enough resources, $\mathcal{E}_{cand}$ can be all ordered node pairs, yielding evolution
over the full $N\times N$ potential matrix; in practice, sparse pools keep the ODE state tractable.

\subsection{Coupled Conditional Gradient Flows}
\label{sec:grad_flow}

Eq.~\eqref{eq:hgode_system} admits an interpretation as \emph{coupled conditional gradient flows}:
\begin{equation}
\begin{cases}
\tau_{feat}\,\dfrac{\partial \mathbf{H}}{\partial t}
= -\nabla_{\mathbf{H}} \mathcal{E}_{feat}(\mathbf{H};\mathbf{U})
\\[6pt]
\tau_{topo}\,\dfrac{\partial \mathbf{U}}{\partial t}
= -\nabla_{\mathbf{U}} \mathcal{E}_{topo}(\mathbf{U};\mathbf{H})
\end{cases}
\label{eq:cond_grad_flow}
\end{equation}
where $E_{\mathrm{feat}}$ and $E_{\mathrm{topo}}$ are two interacting energy landscapes.
This viewpoint mainly serves as a compact way to expose the implicit objectives optimized by the coupled feature-topology dynamics.


\textbf{Topology energy.}
Conditioned on $\mathbf{H}$, the topology dynamics in Eq.~\eqref{eq:hgode_system} correspond to gradient descent on a separable Landau-type energy over $(i,j)\in \mathcal{E}_{\mathrm{cand}}$:
\begin{equation}
\label{eq:E_topo}
\sum_{(i,j)\in\mathcal{E}_{cand}}
\left[
\frac{1}{4}\mathbf{U}_{ij}^{4}
-\frac{1-\lambda}{2}\mathbf{U}_{ij}^{2}
-\big(\mathcal{F}_{\theta}(\mathbf{h}_i,\mathbf{h}_j)\big)\,\mathbf{U}_{ij}
\right]
\end{equation}
This matches the double-well mechanism analyzed in Sec.~\ref{subsec:hysteresis}, with $\lambda$ controlling the barrier height and $F_\theta$ acting as the feature-conditioned bias.

\textbf{Feature energy (diffusion-style instantiation).}
When $\mathcal{G}_{\phi}$ is instantiated as a diffusion operator, \eg,
$\mathcal{G}_{\phi}(\mathbf{H},\mathbf{A})=\mathbf{P}\mathbf{H}-\mathbf{H}$ with
$\mathbf{P}=\mathbf{D}^{-1}\mathbf{A}$, the feature dynamics correspond to descent on a regularized
Dirichlet-type energy:
\begin{equation}
\label{eq:E_feat}
\mathcal{E}_{feat}(\mathbf{H};\mathbf{U})
=
\frac{1}{2}\mathrm{Tr}\!\Big(\mathbf{H}^{\top}(\mathbf{I}-\mathbf{P})\mathbf{H}\Big)
+\frac{\gamma}{2}\|\mathbf{H}\|_{F}^{2}.
\end{equation}
This drives local consensus \emph{within} connected components induced by the effective topology.
More general message-passing operators can be viewed as parameterized generalizations of this diffusion
archetype.

\subsection{Topological Force Field and Scaling}
\label{sec:force_margin}

The topological force $\mathcal{F}_{\theta}$ is the key interface that couples features to topology.
It determines whether an edge remains in its current well or crosses the hysteresis threshold
$\mathcal{F}_{\mathrm{crit}}$ (Sec.~\ref{subsec:hysteresis}) to switch phase.

We parameterize $\mathcal{F}_{\theta}$ as a bounded edge-wise score computed from concatenated node
features:
\begin{equation}
\label{eq:force_def}
\mathcal{F}_{ij}
\;\defeq\;
\big(\mathcal{F}_{\theta}(\mathbf{H})\big)_{ij}
=
s \cdot \tanh\!\Big(
\mathrm{MLP}_{\theta}\big([\mathbf{h}_i \Vert \mathbf{h}_j]\big)
\Big)
\end{equation}
where $(i,j)\in\mathcal{E}_{cand}$. $\tanh(\cdot)$ ensures $\mathcal{F}_{ij}\in[-s,s]$ and $s$ is a scale factor chosen such that
$s \gtrsim \mathcal{F}_{\mathrm{crit}}$.

The sign of $\mathcal{F}_{ij}$ biases the edge state toward the \emph{connected} ($u_{ij}>0$) or
\emph{insulated} ($u_{ij}<0$) well, while the magnitude controls whether switching becomes deterministic
($|\mathcal{F}_{ij}|>\mathcal{F}_{\mathrm{crit}}$) or remains hysteretic
($|\mathcal{F}_{ij}|<\mathcal{F}_{\mathrm{crit}}$).

\subsection{Margin-Inducing Training Objective}
\label{sec:margin_objective}


Our asymptotic analysis relies on a \emph{force-separation} condition: compatible pairs should receive
positive forces exceeding $\mathcal{F}_{\mathrm{crit}}$ (up to a margin), while incompatible pairs should
receive sufficiently negative forces. Under a sustained force-margin condition relative to $\mathcal{F}_{\mathrm{crit}}$, the induced edge dynamics converge to a unique stable well, leading to topology polarization
(Lemma~\ref{lem:margin_polarization}, Appendix~\ref{app:polarization}). To make this condition learnable, we introduce an explicit force-margin regularizer: 
let $\mathcal{L}_{task}$ denote the supervised objective (\eg, cross-entropy for node classification). We construct positive and negative pair sets $\mathcal{P}$ and $\mathcal{N}$, respectively (with sampling
for efficiency), and encourage a margin relative to the hysteresis threshold:
\begin{equation}
\label{eq:force_contra}
\begin{aligned}
\mathcal{L}_{margin}
&=
\sum_{(i,j)\in\mathcal{P}}
\mathrm{softplus}\!\Big(
(\mathcal{F}_{\mathrm{crit}}+\delta)-\mathcal{F}_{ij}
\Big)
\\
&\quad+
\sum_{(i,j)\in\mathcal{N}}
\mathrm{softplus}\!\Big(
(\mathcal{F}_{\mathrm{crit}}+\delta)+\mathcal{F}_{ij}
\Big),
\end{aligned}
\end{equation}
where $\delta>0$ is the margin hyperparameter. The total training objective is
\begin{equation}
\label{eq:total_loss}
\mathcal{L}=\mathcal{L}_{task}+\beta\,\mathcal{L}_{margin},
\end{equation}
with weight $\beta\ge 0$.


We use $\mathcal{L}_{margin}$ as an explicit mechanism to support the force-separation condition
assumed in the theory. When labels are unavailable, we use a fixed pseudo-partition $\{c_i\}$ obtained by a lightweight clustering on initial features, and define $\mathcal{P} \defeq \{(i,j)\in\mathcal{E}_{cand}: c_i=c_j\}$ and $\mathcal{N} \defeq \{(i,j)\in\mathcal{E}_{cand}: c_i\neq c_j\}$.
Note that this regularizer is designed to support the force-separation assumption; the task loss remains the primary supervision signal.
Empirically, we find that task supervision alone may already push forces toward separation, while $\mathcal{L}_{margin}$ improves stability and interpretability by aligning
the learned force scale with the hysteresis threshold.

\textbf{Theory scope.}
The formal results in this paper are mechanism-level: we prove consensus collapse for positive irreducible mixing, bistability and saddle-node thresholds for the edge potential, and scalar edge polarization under a sustained force margin. The coupled learned system uses these components to alter the effective mixing structure; the synthetic diagnostics in Fig.~\ref{fig:monostability_sa} verify this mechanism empirically.

\subsection{Candidate Pool and Annealed Filtering}
\label{sec:active_set}


To scale HGODE, we usually evolve topology on a sparse candidate pool $\mathcal{E}_{\mathrm{cand}}$ constructed once at initialization
(\eg, observed edges, local completion proposals, Laplacian random-walk proposals, and optional random pairs). We maintain and integrate $\textbf{U}_{ij}(t)$ for
$(i,j)\in \mathcal{E}_{\mathrm{cand}}$, so the ODE state dimension is controlled by the candidate budget.

The candidate budget forms a continuum. With only observed edges, HGODE performs hysteretic topology polarization on the given graph support, analogous to standard GNNs operating on the input graph but with continuous edge states. As local, spectral, and random proposals are added, the same dynamics becomes a topology-search process over a richer relation pool. In the dense limit $\mathcal{E}_{\mathrm{cand}}=\mathcal{V}\times\mathcal{V}$, HGODE evolves the full $N\times N$ latent potential matrix; the sparse construction is therefore a scalable approximation of this all-pairs evolution. Appendix~\ref{app:bistability_candidate_pool} gives the corresponding all-pairs interpretation. We use a monotone schedule $\mu(t)$ inside the gate forming the effective adjacency (Sec.~\ref{sec:dynamics}), so weak candidate edges are suppressed while persistent edges must be supported consistently by the learned force field.

\FloatBarrier

\section{Experiments}
We conduct experiments to (i) validate the theoretical predictions of HGODE via
controlled synthetic studies and (ii) demonstrate empirical effectiveness and
robustness on real-world benchmarks. We first use Stochastic Block Model (SBM)
graphs where ground-truth clusters are known, enabling direct inspection of
infinite-depth behavior and topological phase separation. We then evaluate
HGODE on standard node and graph benchmarks. Additional transformer comparisons,
efficiency profiling, and hyperparameter details are reported in
Appendix~\ref{app:supplementary_details}.

\textbf{SBM generation.}
We generate graphs from a $K$-block SBM with node partition $\{C_1,\dots,C_K\}$
and block sizes $\{n_k\}_{k=1}^K$ ($\sum_k n_k=N$). For $i\in C_k$ and $j\in
  C_\ell$, edges are sampled independently as
\begin{equation*}
  \mathbf{A}^{(0)}_{ij} \sim \mathrm{Bernoulli}(p_{k\ell}),
  \qquad
  p_{k\ell}=
  \begin{cases}
    p_{\mathrm{in}},  & k=\ell,     \\
    p_{\mathrm{out}}, & k\neq \ell,
  \end{cases}
\end{equation*}
with $p_{\mathrm{in}} > p_{\mathrm{out}}$ unless stated otherwise. Node features are initialized as
\begin{equation*}
  \mathbf{h}_i(0)=\mu_{c(i)}+\epsilon_i,\qquad \epsilon_i \sim \mathcal{N}(0,\sigma^2 \mathbf{I}),
\end{equation*}
where $c(i)\in\{1,\dots,K\}$ denotes the SBM block label.

\textbf{Models and baselines.}
In synthetic experiments we compare (i) a soft-attention Graph ODE baseline and
(ii) HGODE. The soft-attention baseline constructs a strictly positive,
row-stochastic transition matrix
\begin{equation*}
  \mathbf{P}_{ij}(t) = \frac{\exp\!\left(s_{ij}(t)/\tau_{\mathrm{attn}}\right)}
  {\sum_{k=1}^N \exp\!\left(s_{ik}(t)/\tau_{\mathrm{attn}}\right)},
  \qquad
\end{equation*}
where $s_{ij}(t)=\langle \mathbf{h}_i(t), \mathbf{h}_j(t)\rangle$, and evolves features via consensus dynamics
\begin{equation*}
  \frac{d\mathbf{H}(t)}{dt} = -\big(\mathbf{I}-\mathbf{P}(t)\big)\mathbf{H}(t).
\end{equation*}
HGODE follows Eq.~\ref{eq:hgode_system}, with effective adjacency recovered by $\mathbf{A}_{ij}(t)=\sigma(\mathbf{U}_{ij}(t)/\tau)$.

\subsection{Synthetic experiments}

\begin{figure}[tb]
  \centering
  \includegraphics[width=\linewidth]{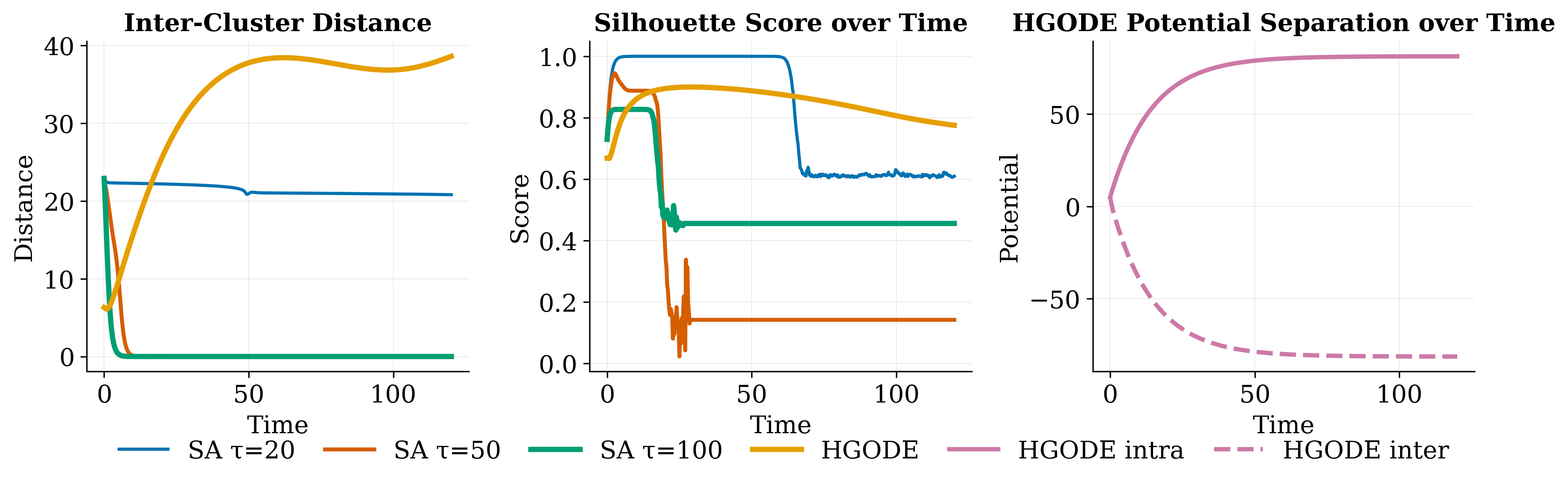}
  \caption{\textbf{Left}: Inter-cluster distance under soft attention collapses as $\tau_{\mathrm{attn}}$ increases, indicating information leakage under increasingly diffuse mixing. \textbf{Middle}: The silhouette score decreases under soft attention at large $\tau_{\mathrm{attn}}$, whereas HGODE remains stable at long time horizons. \textbf{Right}: HGODE potentials $\mathbf{U}_{ij}(t)$ polarize into positive (intra) and negative (inter) regimes, providing mechanistic evidence for hysteresis-driven topological separation.}
  \label{fig:monostability_sa}
\end{figure}

\begin{figure}[t]
  \centering
  \includegraphics[width=\linewidth]{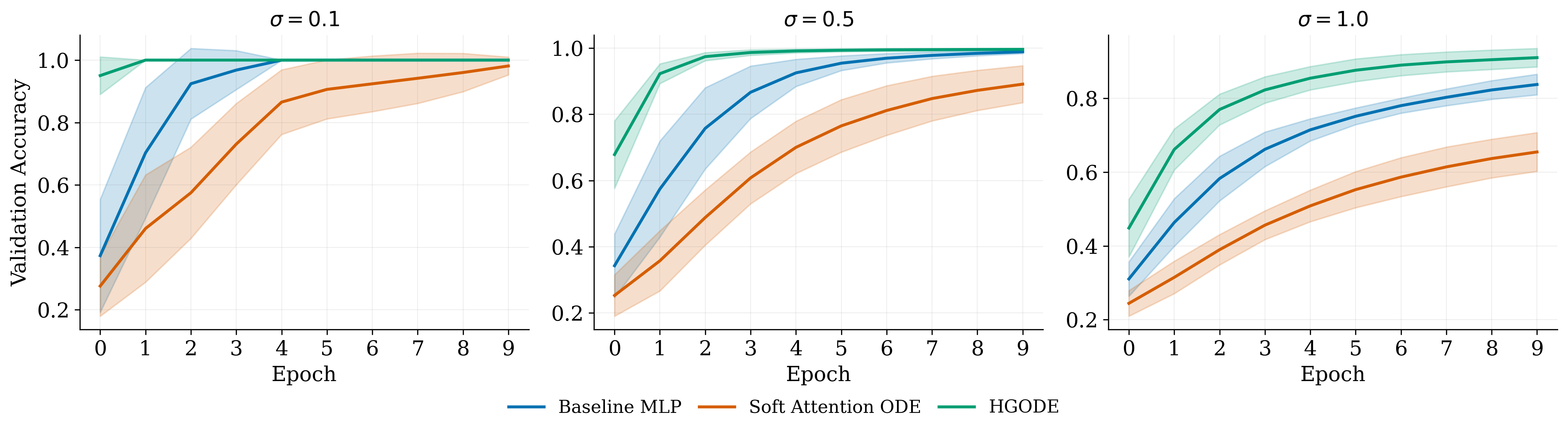}
  \caption{Validation accuracy under feature perturbations on SBM graphs ($\mu=0.5$, $p_{\mathrm{out}}=0.3$). Subplots correspond to noise levels \textbf{Left}: $\sigma=0.1$, \textbf{Middle}: $\sigma=0.5$, \textbf{Right}: $\sigma=1.0$. Solid lines denote mean over 10 seeds and shaded regions indicate $\pm 1$ standard deviation.}
  \label{fig:perturbation_exp}
\end{figure}

\textbf{Evaluation protocol and theory-driven diagnostics.}
Our synthetic diagnostics are designed to directly probe the mechanisms
predicted by the theory. To validate Theorem~\ref{thm:monostability}, we
conduct an attention-temperature sweep over $\tau_{\mathrm{attn}}$ for the
soft-attention (SA) Graph ODE baseline and track the long-time evolution of
node representations. We measure: (i) representation mixing via intra- and
inter-cluster distances, (ii) clustering quality via the silhouette score over
time, and (iii) for HGODE, the mean edge potential $\mathbf{U}_{ij}(t)$ for
intra- and inter-cluster node pairs as a mechanistic readout.
Figure~\ref{fig:monostability_sa} summarizes the results. As
$\tau_{\mathrm{attn}}$ increases, SA becomes increasingly diffuse and exhibits
progressive information leakage, ultimately driving the features toward
collapse. In contrast, HGODE preserves cluster distinguishability at long time
horizons: inter-cluster separation remains large while intra-cluster dispersion
is controlled. The potential trajectories further confirm hysteresis-induced
phase separation, with $\mathbf{U}_{ij}(t)$ converging to a positive regime for
intra-cluster pairs and to a negative regime for inter-cluster pairs,
corresponding to connected versus insulated topological phases, respectively.

\begin{table*}[tb]
  \caption{Performance evaluation on real-world benchmarks. The main table emphasizes message-passing and continuous-depth / anti-over-smoothing baselines; graph-transformer baselines are reported in Table~\ref{tab:transformer_baselines}. ``--'' indicates not applicable or no official implementation for that task. The top three values are marked in green, blue, and gray, respectively.}
  \label{tb:performance}
  \centering
  \begin{small}
  \renewcommand{\arraystretch}{1.03}
  \begin{tabular}{lcccccc}
    \toprule
    & \multicolumn{3}{c}{Node Classification}
    & \multicolumn{3}{c}{Graph Classification} \\
    \cmidrule(lr){2-4}\cmidrule(lr){5-7}
    & Cora & Chameleon & ogbn-proteins & ZINC & Peptides-func & ogbg-molpcba \\
    Metric & Acc.$\uparrow$ & Acc.$\uparrow$ & ROC-AUC$\uparrow$ & MAE$\downarrow$ & A.P.$\uparrow$ & A.P.$\uparrow$ \\
    \midrule
    \multicolumn{7}{l}{\textit{Message-passing GNNs}} \\
    GCN                     & 81.42$\pm$0.36 & 40.27$\pm$2.48 & 71.18$\pm$0.27 & 0.359$\pm$0.015 & 0.683$\pm$0.004 & 0.204$\pm$0.002 \\
    GCN\RomanNumeralCaps{2} & 84.74$\pm$0.10 & 44.47$\pm$1.48 & 75.64$\pm$0.48 & 0.178$\pm$0.032 & 0.699$\pm$0.003 & 0.265$\pm$0.003 \\
    GCN$^{+}$               & 85.27$\pm$0.55 & 45.89$\pm$2.86 & 77.84$\pm$0.56 & 0.087$\pm$0.012 & \second{0.716$\pm$0.005} & \second{0.269$\pm$0.002} \\
    \midrule
    \multicolumn{7}{l}{\textit{Continuous-depth and anti-over-smoothing baselines}} \\
    GDE                     & 82.44$\pm$0.81 & 45.63$\pm$2.47 & 74.63$\pm$0.20 & 0.239$\pm$0.012 & 0.632$\pm$0.014 & 0.247$\pm$0.002 \\
    CGNN                    & 83.68$\pm$0.46 & 51.35$\pm$1.58 & 77.12$\pm$0.61 & 0.153$\pm$0.028 & 0.651$\pm$0.037 & 0.252$\pm$0.003 \\
    GRAND                   & 83.61$\pm$0.37 & 57.72$\pm$1.86 & 76.45$\pm$0.42 & 0.147$\pm$0.018 & 0.663$\pm$0.016 & 0.260$\pm$0.003 \\
    Sheaf Diff.             & 85.53$\pm$0.64 & 68.16$\pm$1.62 & 78.96$\pm$0.56 & 0.148$\pm$0.021 & 0.678$\pm$0.047 & 0.263$\pm$0.002 \\
    FLODE                   & \third{86.44$\pm$1.17} & 67.26$\pm$1.37 & 79.23$\pm$0.75 & 0.124$\pm$0.034 & 0.674$\pm$0.041 & 0.267$\pm$0.004 \\
    GREAD-BS                & \second{87.33$\pm$0.84} & \third{71.42$\pm$1.78} & 79.21$\pm$0.74 & 0.084$\pm$0.032 & 0.704$\pm$0.046 & 0.257$\pm$0.004 \\
    GREAD-Exp               & \best{87.94$\pm$0.42} & 69.23$\pm$1.14 & 78.61$\pm$0.68 & 0.102$\pm$0.047 & 0.711$\pm$0.040 & 0.262$\pm$0.001 \\
    BuNN                    & 86.10$\pm$0.48 & 69.13$\pm$1.21 & 78.92$\pm$0.48 & 0.087$\pm$0.056 & 0.712$\pm$0.067 & 0.245$\pm$0.004 \\
    ACMP                    & 84.76$\pm$0.74 & 56.24$\pm$2.01 & 65.72$\pm$0.72 & 0.124$\pm$0.036 & 0.679$\pm$0.051 & 0.226$\pm$0.005 \\
    FROND                   & 84.67$\pm$0.86 & \second{71.62$\pm$1.61} & \third{80.26$\pm$0.49} & \second{0.079$\pm$0.028} & 0.698$\pm$0.034 & 0.260$\pm$0.005 \\
    DRAGON                  & 84.30$\pm$0.61 & 70.14$\pm$1.33 & \second{80.46$\pm$0.42} & \third{0.081$\pm$0.012} & \best{0.724$\pm$0.045} & \third{0.266$\pm$0.002} \\
    \midrule
    HGODE (ours)            & 86.26$\pm$0.78 & \best{72.56$\pm$1.24} & \best{81.24$\pm$0.63} & \best{0.078$\pm$0.025} & \third{0.714$\pm$0.022} & \best{0.278$\pm$0.003} \\
    \addlinespace[2pt]
    \quad \textit{w/o hysteresis}   & 83.24$\pm$0.32 & 66.24$\pm$1.26 & 75.26$\pm$0.15 & 0.145$\pm$0.032 & 0.671$\pm$0.013 & 0.254$\pm$0.005 \\
    \quad \textit{w/o topo. search}  & 84.14$\pm$0.46 & 70.44$\pm$1.41 & 77.19$\pm$0.52 & 0.162$\pm$0.017 & 0.653$\pm$0.041 & 0.262$\pm$0.002 \\
    \quad \textit{w/o force margin}      & 84.36$\pm$0.19 & 61.24$\pm$0.73 & 80.24$\pm$0.77 & 0.172$\pm$0.080 & 0.689$\pm$0.034 & 0.260$\pm$0.003 \\
    \bottomrule
  \end{tabular}
  \end{small}
\end{table*}

\textbf{Synthetic perturbation tests.}
To evaluate robustness under feature corruption, we generate SBM graphs with
increased cross-cluster connectivity by setting $p_{\mathrm{out}}=0.3$ (while
keeping other SBM parameters fixed). We perturb node features by additive
Gaussian noise with standard deviation $\sigma$, \ie,
\begin{equation*}
  \mathbf{h}_i(0)=\mu_{c(i)}+\epsilon_i,\qquad \epsilon_i\sim\mathcal{N}(0,\sigma^2 \mathbf{I}),
\end{equation*}
and sweep $\sigma$ to control the signal-to-noise ratio. We generate 2000 graphs in total, using 80\% for training and 20\% for validation. We compare three models: a DeepSets-style MLP baseline (ignoring graph structure), the soft-attention Graph ODE baseline, and HGODE. For each model and each noise level, we run 10 random seeds and train for 10 epochs, reporting validation accuracy aggregated across seeds (mean with shaded uncertainty).

Figure~\ref{fig:perturbation_exp} summarizes the results. As $\sigma$
increases, the soft-attention Graph ODE becomes markedly less robust:
performance degrades and optimization becomes slower, consistent with increased
representation mixing under diffuse global attention. In contrast, HGODE
remains comparatively stable across noise levels, indicating that its dynamic
topology can suppress spurious feature diffusion and recover informative signal
even when the features are substantially corrupted (\eg, at $\sigma=1.0$ with
$\mu=0.5$, corresponding to a low signal-to-noise regime).

\subsection{Real-world benchmark evaluation}
\label{sec:realworld}

We evaluate HGODE on node- and graph-level benchmarks where long-range propagation and long-time mixing behavior are most relevant.
For node classification, we use Cora~\cite{mccallum2000automating} and ogbn-proteins~\cite{hu2020open} for broad comparison, and Chameleon~\cite{rozemberczki2021multi} to test a heterophilous setting. For graph classification and regression, we use ZINC~\cite{JMLR:v24:22-0567}, ogbg-molpcba~\cite{hu2020open}, and Peptides-func~\cite{dwivedi2022long} as long-range graph benchmarks. Experiments are conducted on NVIDIA A100 GPUs with Adam optimization; learning rates are selected per dataset.

\textbf{Baselines.}
The main comparison focuses on baselines closest to the mechanism studied in this paper:
\textbf{(i) Message-passing GNNs:} GCN~\cite{kipf2017semisupervised}, GCNII~\cite{chen2020simple}, and GCN$^{+}$~\cite{luo2025can};
\textbf{(ii) Continuous-depth and anti-over-smoothing models:} GDE~\cite{poli2019graph}, CGNN~\cite{pmlr-v119-xhonneux20a}, GRAND~\cite{chamberlain2021grand}, Sheaf Diffusion~\cite{bodnar2022neural}, FLODE~\cite{maskey2024fractional}, GREAD~\cite{choi2023gread}, BuNN~\cite{bamberger2025bundle}, ACMP~\cite{wang2022acmp}, FROND~\cite{kang2024frond}, and DRAGON~\cite{zhao2024dragon}.
This second group modifies continuous-time diffusion, reaction, fractional-order, or bundle-valued propagation, and is therefore most directly aligned with our long-time mixing analysis. Graph transformer baselines are high-capacity architectures with different inductive biases; we report them separately in Appendix~\ref{app:transformer_baselines} to keep the main table focused.

\textbf{Solver choice.}
The coupled dynamics can exhibit sharp switching near saddle-node transitions of the topology field (Sec.~\ref{subsec:hysteresis}), where the vector field
changes rapidly as $|F_{ij}|$ approaches the hysteresis threshold. We therefore use an adaptive-step Dormand--Prince solver
(\texttt{dopri5}) for all experiments. The relative and absolute tolerances are both set to $10^{-5}$; Appendix~\ref{app:reproducibility} reports the search ranges and representative starting configurations in Tables~\ref{tab:hparam_ranges}--\ref{tab:hparam_starts}.

\textbf{Main results and interpretation.}
Table~\ref{tb:performance} summarizes the results under comparable parameter budgets ($\sim$1M for non-GCN models).
HGODE consistently performs well across benchmarks and shows its largest advantages in regimes where dense, irreversible mixing is harmful.
In particular, HGODE obtains the best results on \textsc{Chameleon}, \textsc{ogbn-proteins}, \textsc{ZINC}, and \textsc{ogbg-molpcba} among the message-passing and continuous-depth baselines.
On \textsc{Cora}, methods such as GREAD and FLODE remain slightly stronger, suggesting that additional topology evolution is less critical in easier homophilous regimes.
The transformer comparison in Table~\ref{tab:transformer_baselines} shows that specialized transformer-style inductive biases can be advantageous on some graph-level benchmarks, while HGODE remains competitive without relying on global attention.
Overall, these results support our central claim that coupling feature dynamics with hysteretic topology evolution is particularly beneficial for heterophily and long-time propagation.
Table~\ref{tab:efficiency} further reports training and inference cost for representative continuous-depth baselines.

\textbf{Ablation experiments.}
We ablate three components to quantify their individual contributions:
(i) \textbf{No-hysteresis}: we remove the cubic term in Eq.~\ref{eq:bistable_dynamics}, turning the topology dynamics into a single-well relaxation;
(ii) \textbf{No topology search}: we disable dynamic topology evolution and restrict propagation to the prior edges only;
(iii) \textbf{No force margin}: we drop the force margin loss term from Eq.~\ref{eq:total_loss}.

The results in Table~\ref{tb:performance} show that removing hysteresis consistently degrades performance on all datasets, supporting the role of bistability in preventing unstable edge flipping and in maintaining discriminative structure over long-time integration.
Disabling topology search mainly harms performance on graphs requiring long-range dependency modeling, while the impact is smaller on relatively local graphs, suggesting that extended candidate connectivity is most useful when task-relevant signals are multi-hop.
Finally, removing the force margin loss causes a substantial drop on \textsc{Chameleon} (15.6\% absolute), indicating that the learned force field and its induced edge biases are critical under heterophily, where naive proximity-based neighborhoods are often misleading.

\section{Conclusion}

We identify the \emph{monostability trap} in diffusion-style Graph ODEs, where strictly positive and irreducible mixing leads to inevitable feature collapse in the long-time regime.
To address this issue, we propose \textbf{Hysteresis Graph ODE (HGODE)}, which treats graph topology as a co-evolving dynamical state rather than a fixed or softly weighted operator.
By introducing latent edge potentials governed by bistable hysteresis, HGODE enables differentiable topological phase transitions that explicitly control the asymptotic mixing structure.
This provides a principled mechanism to stabilize continuous-depth graph representation learning by dynamically shaping connectivity, instead of solely modifying feature propagation on a fixed graph.
Future directions include richer force parameterizations, more expressive latent potential landscapes, and alternative candidate-pool constructions.

\cleardoublepage
\section*{Impact Statement}
This paper presents work whose goal is to advance the field of Machine
Learning. There are many potential societal consequences of our work, none
which we feel must be specifically highlighted here.

\bibliography{ref}
\bibliographystyle{icml2026}

\newpage
\appendix
\onecolumn
\section{Proofs.}
\label{sec:proofs}
\subsection{Proof of Theorem~\ref{thm:monostability}.}
\begin{proof}
    The evolution of the node features is governed by the linear differential equation:
    \begin{equation}
        \frac{d\mathbf{H}(t)}{dt} = -\mathbf{L}_{rw} \mathbf{H}(t),
    \end{equation}
    where $\mathbf{L}_{rw} = \mathbf{I} - \mathbf{P}$ is the random-walk Laplacian. Since $\mathbf{P}$ is time-invariant in this analysis, the analytical solution is given by the matrix exponential:
    \begin{equation}
        \mathbf{H}(t) = e^{-t(\mathbf{I} - \mathbf{P})} \mathbf{H}(0).
    \end{equation}

    We analyze the asymptotic behavior of the operator $\mathbf{T}(t) =
        e^{-t(\mathbf{I} - \mathbf{P})}$ via the spectral properties of $\mathbf{P}$.
    Since $\mathbf{P}$ is row-stochastic and irreducible (by the strong
    connectivity assumption), the Perron-Frobenius theorem guarantees the
    following:
    \begin{enumerate}
        \item The spectral radius is $\rho(\mathbf{P}) = 1$, and $\lambda_1 = 1$ is a simple
              eigenvalue.
        \item The right eigenvector associated with $\lambda_1$ is $\mathbf{1}$ (since
              $\mathbf{P}\mathbf{1} = \mathbf{1}$).
        \item The left eigenvector associated with $\lambda_1$ is the unique stationary
              distribution $\boldsymbol{\pi}$.
        \item All other eigenvalues $\lambda_i$ (for $i=2, \dots, N$) satisfy $|\lambda_i| <
                  1$, implying $\text{Re}(\lambda_i) < 1$.
    \end{enumerate}

    Let $\mathbf{P} = \mathbf{U} \mathbf{\Lambda} \mathbf{U}^{-1}$ be the
    eigendecomposition of $\mathbf{P}$ (assuming diagonalizability for exposition;
    the argument holds generally using Jordan canonical forms). The eigenvalues of
    the Laplacian $\mathbf{L}_{rw}$ are given by $\mu_i = 1 - \lambda_i$. Using the
    spectral expansion, we can express the evolution of the features as:
    \begin{equation}
        \mathbf{H}(t) = \left( \sum_{i=1}^N e^{-t(1 - \lambda_i)} \mathbf{u}_i \mathbf{v}_i^\top \right) \mathbf{H}(0),
    \end{equation}
    where $\mathbf{u}_i$ and $\mathbf{v}_i^\top$ are the corresponding right and left eigenvectors.

    Separating the dominant mode ($i=1$) from the non-dominant modes ($i \ge 2$):
    \begin{equation}
        \mathbf{H}(t) = e^{-t(1 - 1)} \mathbf{1}\boldsymbol{\pi}^\top \mathbf{H}(0) + \sum_{i=2}^N e^{-t(1 - \lambda_i)} \mathbf{u}_i \mathbf{v}_i^\top \mathbf{H}(0).
    \end{equation}

    For the dominant mode, the exponential term is $e^0 = 1$. For all non-dominant
    modes $i \ge 2$, since $\text{Re}(\lambda_i) < 1$, we have $\text{Re}(1 -
        \lambda_i) > 0$. Consequently, the exponential terms decay asymptotically:
    \begin{equation}
        \lim_{t \to \infty} \left| e^{-t(1 - \lambda_i)} \right| = \lim_{t \to \infty} e^{-t \cdot \text{Re}(1 - \lambda_i)} = 0.
    \end{equation}

    Taking the limit $t \to \infty$ for the full system:
    \begin{equation}
        \lim_{t \to \infty} \mathbf{H}(t) = \mathbf{1}\boldsymbol{\pi}^\top \mathbf{H}(0) + \mathbf{0} = \mathbf{1} \left( \boldsymbol{\pi}^\top \mathbf{H}(0) \right).
    \end{equation}
    Thus, the system converges to a state where every row of $\mathbf{H}$ is identical to the weighted centroid of the initial features, completing the proof.
\end{proof}

\subsection{Proof of Lemma~\ref{lemma:window_contraction}}
\label{app:tv_proofs}

We first prove a stronger differential contraction bound, from which the window
contraction lemma follows immediately.

\begin{lemma}[Instantaneous diameter contraction]
    \label{lem:instant_contraction}
    Let $x(t)\in\mathbb{R}^N$ follow the time-varying consensus flow
    \[
        \dot{\textbf{x}}(t) = -(\textbf{I} - \textbf{P}(t))\textbf{x}(t) = -\textbf{x}(t) + \textbf{P}(t)\textbf{x}(t),
    \]
    where $\textbf{P}(t)$ is row-stochastic for all $t$ and satisfies uniform
    positivity: $\textbf{P}_{ij}(t)\ge \alpha$ for all $i,j,t$ with some
    $\alpha>0$. Define $\mathrm{diam}(\textbf{x})\defeq \max_i \textbf{x}_i -
        \min_i \textbf{x}_i$. Then for almost every $t$,
    \[
        \frac{d}{dt}\mathrm{diam}(\textbf{x}(t)) \le -2\alpha\,\mathrm{diam}(\textbf{x}(t)).
    \]
    Consequently,
    \[
        \mathrm{diam}(\textbf{x}(t)) \le e^{-2\alpha t}\,\mathrm{diam}(\textbf{x}(0)),\qquad \forall t\ge 0.
    \]
\end{lemma}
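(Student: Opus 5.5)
We bound the upper Dini derivatives of $M(t)\defeq\max_i \mathbf{x}_i(t)$ and $m(t)\defeq\min_i \mathbf{x}_i(t)$ separately, and then subtract them. Since $\mathbf{P}(t)$ is piecewise continuous and bounded, $\mathbf{x}(t)$ is absolutely continuous (indeed piecewise $C^1$). The functions $M$ and $m$ are the max and min of finitely many such functions, so they are Lipschitz on bounded intervals and hence differentiable almost everywhere. At almost every $t$, Danskin's lemma for a finite max gives
\[
\dot M(t)=\dot{\mathbf{x}}_i(t)
\]
for any index $i$ attaining the maximum at $t$ (the derivative of $\mathbf{x}_i$ exists there outside a null set). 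An analogous identity holds for $m$ with any minimizing index $k$.

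Fix such a $t$, a maximizing index $i$, and a minimizing index $k$. By row-stochasticity,
\[
\dot{\mathbf{x}}_i=\sum_j \mathbf{P}_{ij}(t)\bigl(\mathbf{x}_j-\mathbf{x}_i\bigr)=\sum_j \mathbf{P}_{ij}(t)\bigl(\mathbf{x}_j-M\bigr).
\]
Every term $\mathbf{x}_j-M$ is nonpositive, so each term can only become larger if we replace $\mathbf{P}_{ij}(t)$ by the smaller weight $\alpha\le \mathbf{P}_{ij}(t)$. This gives $\dot{\mathbf{x}}_i\le \alpha(\mathbf{x}_k-M)=-\alpha\,\mathrm{diam}(\mathbf{x})$, where we keep only the $j=k$ term and drop the others, which are nonpositive. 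Symmetrically, $\dot{\mathbf{x}}_k=\sum_j \mathbf{P}_{kj}(\mathbf{x}_j-m)$ has all terms nonnegative, so $\dot{\mathbf{x}}_k\ge \alpha(\mathbf{x}_i-m)=\alpha\,\mathrm{diam}(\mathbf{x})$. Subtracting yields
\[
\tfrac{d}{dt}\mathrm{diam}(\mathbf{x}(t))=\dot M-\dot m\le -2\alpha\,\mathrm{diam}(\mathbf{x}(t))
\]
almost everywhere. The degenerate case $i=k$ means $\mathrm{diam}=0$, and the bound is then trivial.

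For the integrated bound, set $g(t)=e^{2\alpha t}\mathrm{diam}(\mathbf{x}(t))$. This function is locally Lipschitz, hence absolutely continuous, and $\dot g\le 0$ almost everywhere. An absolutely continuous function with almost-everywhere nonpositive derivative is nonincreasing, so $g(t)\le g(0)$, which is exactly the claim. Lemma~\ref{lemma:window_contraction} then follows with any $T_w>0$ and $\rho=1-e^{-2\alpha T_w}$.

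The main obstacle is making the nonsmooth calculus rigorous. Specifically, we must justify that $\dot M$ equals the derivative of a maximizing coordinate almost everywhere, even though the maximizer may switch and $\mathbf{P}$ may jump. The way we handle this is the Lipschitz / absolute-continuity route. As a fallback, we can work directly with the upper right Dini derivative $D^+M(t)=\max_{i\in\arg\max}\dot{\mathbf{x}}_i(t^+)$, using one-sided limits of $\mathbf{P}$. That argument is valid at every $t$ and avoids the almost-everywhere issue entirely, via the standard comparison lemma for Dini derivatives.
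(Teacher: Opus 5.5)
Your proof is correct and follows essentially the same route as the paper's: you bound $\dot M$ at a maximizing index and $\dot m$ at a minimizing index using the single cross entry $\mathbf{P}_{ik}(t)\ge\alpha$, subtract, and integrate the resulting differential inequality. The only difference is that you make rigorous what the paper leaves implicit, namely the identity $\dot M=\dot{\mathbf{x}}_i$ almost everywhere and the Gr\"onwall step via absolute continuity of $e^{2\alpha t}\mathrm{diam}(\mathbf{x}(t))$.
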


\begin{proof}
    Let $M(t)\defeq \max_i \textbf{x}_i(t)$ and $m(t)\defeq \min_i \textbf{x}_i(t)$.
    Pick indices $i^+(t)\in\arg\max_i \textbf{x}_i(t)$ and $i^-(t)\in\arg\min_i \textbf{x}_i(t)$
    (the argument holds for almost every $t$, since $M(t)$ and $m(t)$ are \textit{a.e.} differentiable).

    For the maximizer $i^+$, we have
    \[
        \dot{\textbf{x}}_{i^+}(t) = -\textbf{x}_{i^+}(t) + \sum_{j=1}^N \textbf{P}_{i^+ j}(t)\,\textbf{x}_j(t).
    \]
    Let $j^-$ be any minimizer such that $\textbf{x}_{j^-}(t)=m(t)$. By uniform
    positivity, $\textbf{P}_{i^+ j^-}(t)\ge \alpha$. Since all other
    $\textbf{x}_j(t)\le M(t)$, we obtain the upper bound
    \[
        \sum_{j=1}^N \textbf{P}_{i^+ j}(t)\,\textbf{x}_j(t)
        \le \alpha m(t) + (1-\alpha)M(t).
    \]
    Therefore,
    \[
        \dot M(t) = \dot{\textbf{x}}_{i^+}(t)
        \le -M(t) + \alpha m(t) + (1-\alpha)M(t)
        = -\alpha\,(M(t)-m(t)).
    \]
    Similarly, for the minimizer $i^-$, let $j^+$ be a maximizer with
    $\textbf{x}_{j^+}(t)=M(t)$. Using $\textbf{P}_{i^- j^+}(t)\ge \alpha$ and
    $\textbf{x}_j(t)\ge m(t)$ for all $j$, we obtain
    \[
        \sum_{j=1}^N \textbf{P}_{i^- j}(t)\,\textbf{x}_j(t)
        \ge \alpha M(t) + (1-\alpha)m(t),
    \]
    hence
    \[
        \dot m(t) = \dot{\textbf{x}}_{i^-}(t)
        \ge -m(t) + \alpha M(t) + (1-\alpha)m(t)
        = \alpha\,(M(t)-m(t)).
    \]
    Combining the two inequalities,
    \[
        \frac{d}{dt}\mathrm{diam}(\textbf{x}(t))
        = \dot M(t) - \dot m(t)
        \le -\alpha(M(t)-m(t)) - \alpha(M(t)-m(t))
        = -2\alpha\,\mathrm{diam}(\textbf{x}(t)).
    \]
    Solving this differential inequality yields $\mathrm{diam}(\textbf{x}(t)) \le
        e^{-2\alpha t}\mathrm{diam}(\textbf{x}(0))$.
\end{proof}

\begin{lemma}[Window contraction]
    \label{lem:window_contraction_app}
    Under the assumptions of Lemma~\ref{lem:instant_contraction},
    for any $T_w>0$ there exists $\rho\in(0,1)$ such that
    \[
        \mathrm{diam}(x(t+T_w)) \le (1-\rho)\,\mathrm{diam}(x(t)),\qquad \forall t\ge 0.
    \]
    One can take $\rho \defeq 1-e^{-2\alpha T_w}$.
\end{lemma}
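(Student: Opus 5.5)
The plan is to derive the window contraction directly from Lemma~\ref{lem:instant_contraction}, which already gives the exponential bound $\mathrm{diam}(\mathbf{x}(t))\le e^{-2\alpha t}\,\mathrm{diam}(\mathbf{x}(0))$. The only real work is to restart that bound at an arbitrary initial time $t_0\ge 0$ rather than at $0$.

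First, fix $t_0\ge 0$ and define the shifted trajectory $\mathbf{y}(s)\defeq \mathbf{x}(t_0+s)$ for $s\ge 0$. It solves $\dot{\mathbf{y}}(s)=-(\mathbf{I}-\widetilde{\mathbf{P}}(s))\mathbf{y}(s)$ with $\widetilde{\mathbf{P}}(s)\defeq \mathbf{P}(t_0+s)$. This shifted operator is still row-stochastic, piecewise continuous, and satisfies $\widetilde{\mathbf{P}}_{ij}(s)\ge\alpha$ for all $s$, with the same $\alpha$, because Assumption~\ref{ass:uniform_pos} is uniform in time. The hypotheses of Lemma~\ref{lem:instant_contraction} therefore hold for $\mathbf{y}$, and we get $\mathrm{diam}(\mathbf{y}(s))\le e^{-2\alpha s}\,\mathrm{diam}(\mathbf{y}(0))$. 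Setting $s=T_w$ gives
\[
\mathrm{diam}(\mathbf{x}(t_0+T_w))\le e^{-2\alpha T_w}\,\mathrm{diam}(\mathbf{x}(t_0)).
\]
Equivalently, one can integrate the a.e.\ differential inequality $\tfrac{d}{dt}\mathrm{diam}\le -2\alpha\,\mathrm{diam}$ from $t_0$ to $t_0+T_w$. For this route, note that $\mathrm{diam}(\mathbf{x}(\cdot))$ is a maximum minus a minimum of finitely many absolutely continuous functions, so it is itself absolutely continuous. Grönwall's inequality therefore applies on the interval.

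Second, set $\rho\defeq 1-e^{-2\alpha T_w}$. Since $\alpha>0$ and $T_w>0$, we have $e^{-2\alpha T_w}\in(0,1)$, so $\rho\in(0,1)$ and $1-\rho=e^{-2\alpha T_w}$. This gives the claim. The constant $\rho$ depends only on $\alpha$ and $T_w$, not on $t_0$ or on the initial signal, which is exactly the uniformity required in Lemma~\ref{lemma:window_contraction}.

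The only step that needs care is the regularity used to pass from the a.e.\ derivative bound to the integrated bound, since $\mathbf{P}(t)$ is only piecewise continuous and the argmax indices may switch. I would handle this as follows. The solution $\mathbf{x}$ is continuous and piecewise $C^1$, and bounded on compact intervals. Hence each coordinate is Lipschitz on $[t_0,t_0+T_w]$, and so are $M$ and $m$. At points where $M$ is differentiable, its derivative equals $\dot{\mathbf{x}}_{i^+}$ for any active maximizer, by the standard Danskin-type argument. This justifies the computation in Lemma~\ref{lem:instant_contraction}. Beyond that, the proof is immediate.
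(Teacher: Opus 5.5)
Your proposal is correct and takes the same route as the paper. Both restart the exponential bound of Lemma~\ref{lem:instant_contraction} at an arbitrary time $t$ and set $\rho = 1-e^{-2\alpha T_w}$. You additionally make explicit the time-shift invariance and the absolute-continuity/Danskin regularity step, which the paper leaves implicit.
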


\begin{proof}
    By Lemma~\ref{lem:instant_contraction},
    $\mathrm{diam}(\textbf{x}(t+T_w)) \le e^{-2\alpha T_w}\mathrm{diam}(\textbf{x}(t))$.
    Set $\rho = 1-e^{-2\alpha T_w}\in(0,1)$.
\end{proof}

\subsection{Proof of Theorem~\ref{thm:time_varying_consensus}}
\begin{theorem}[Consensus trap under time-varying positive mixing]
    \label{thm:tv_consensus_trap_app}
    Consider the time-varying consensus flow
    \[
        \frac{d\textbf{H}(t)}{dt} = -(\textbf{I} - \textbf{P}(t))\textbf{H}(t),
    \]
    where $\textbf{P}(t)$ is row-stochastic and uniformly positive:
    $\textbf{P}_{ij}(t)\ge \alpha$ for all $i,j,t$. Then for each feature dimension
    $k$, there exist constants $C,\kappa>0$ such that
    \[
        \mathrm{diam}(\textbf{H}_{\cdot k}(t)) \le C e^{-\kappa t}\,\mathrm{diam}(\textbf{H}_{\cdot k}(0)).
    \]
    Moreover, there exists $y(t)\in\mathbb{R}^m$ such that
    \[
        \lim_{t\to\infty}\|\textbf{H}(t)-\mathbf{1}y(t)^\top\|_F = 0.
    \]
\end{theorem}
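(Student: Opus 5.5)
The plan is to decouple the matrix flow into its $m$ columns and apply Lemma~\ref{lem:instant_contraction} to each column. Since $\mathbf{P}(t)$ acts on the left, the $k$-th column $\mathbf{x}^{(k)}(t) \defeq \mathbf{H}_{\cdot k}(t)$ satisfies the scalar-per-node consensus flow
\[
\dot{\mathbf{x}}^{(k)}(t) = -(\mathbf{I}-\mathbf{P}(t))\,\mathbf{x}^{(k)}(t),
\]
with the same row-stochastic, uniformly positive $\mathbf{P}(t)$. Piecewise continuity of $\mathbf{P}(t)$ and boundedness of its entries in $[0,1]$ give a unique absolutely continuous (Carath\'eodory) solution. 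The max/min functions $M(t), m(t)$ are then Lipschitz, hence differentiable almost everywhere. At such points the derivative of $M$ equals $\dot x_{i^+}$ for an active maximizer, by the standard Danskin-type argument. This justifies the a.e. differential inequality used in Lemma~\ref{lem:instant_contraction}.

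Applying Lemma~\ref{lem:instant_contraction} columnwise yields
\[
\mathrm{diam}(\mathbf{H}_{\cdot k}(t)) \le e^{-2\alpha t}\,\mathrm{diam}(\mathbf{H}_{\cdot k}(0)),
\]
so the first claim holds with $C=1$ and $\kappa=2\alpha$, uniformly in $k$. Alternatively, one can chain Lemma~\ref{lem:window_contraction_app} over windows $[nT_w,(n+1)T_w]$ and bound intermediate times by the fact that the diameter is nonincreasing (from the same inequality, or from the convex-hull invariance $\dot M\le 0$, $\dot m\ge 0$). This route gives $C=(1-\rho)^{-1}$ and $\kappa=-\log(1-\rho)/T_w$. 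I would present the direct route and mention the windowed one as the form matching the main-text lemma.

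For the second claim I would choose $y(t)$ explicitly: let $y_k(t)$ be the midpoint $\tfrac12(\max_i \mathbf{H}_{ik}(t)+\min_i \mathbf{H}_{ik}(t))$. Then $|\mathbf{H}_{ik}(t)-y_k(t)|\le \tfrac12\mathrm{diam}(\mathbf{H}_{\cdot k}(t))$ for every $i$, hence
\[
\|\mathbf{H}(t)-\mathbf{1}y(t)^\top\|_F^2 \le \frac{N}{4}\sum_{k=1}^m e^{-4\alpha t}\,\mathrm{diam}(\mathbf{H}_{\cdot k}(0))^2 \to 0,
\]
and the convergence is exponential. As an optional strengthening, $M$ is nonincreasing and $m$ is nondecreasing with a vanishing gap, so $y(t)$ actually converges to a limit $y^\ast$. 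The only delicate point is the nonsmooth differentiation of $M(t)$ and $m(t)$ at switching times of the maximizer and at discontinuities of $\mathbf{P}$. I expect this to be the main obstacle, and would handle it via the a.e. Lipschitz argument above. A cleaner alternative that avoids derivatives is the upper Dini derivative, used with a comparison lemma for $D^+ \mathrm{diam}\le -2\alpha\,\mathrm{diam}$.
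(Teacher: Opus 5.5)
Your proposal is correct and follows the paper's proof: you reduce to columns, apply Lemma~\ref{lem:instant_contraction} to get $C=1$, $\kappa=2\alpha$, and then bound the Frobenius norm using a representative row. The only differences are that the paper takes $y(t)$ to be the first row of $\mathbf{H}(t)$ (factor $N$ rather than your midpoint's $N/4$), and it omits your a.e.-differentiability justification for $M,m$ and your observation that $y(t)$ converges to a fixed $y^\ast$, both of which are sound refinements.
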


\begin{proof}
    Fix a feature dimension $k\in\{1,\dots,m\}$ and define $\textbf{x}(t)\defeq \textbf{H}_{\cdot k}(t)\in\mathbb{R}^N$.
    From the matrix ODE, $x(t)$ satisfies
    \[
        \dot{\textbf{x}}(t) = -(\textbf{I} - \textbf{P}(t))\textbf{x}(t).
    \]
    Applying Lemma~\ref{lem:instant_contraction} yields
    \[
        \mathrm{diam}(\textbf{H}_{\cdot k}(t)) = \mathrm{diam}(\textbf{x}(t))
        \le e^{-2\alpha t}\,\mathrm{diam}(\textbf{x}(0))
        = e^{-2\alpha t}\,\mathrm{diam}(\textbf{H}_{\cdot k}(0)).
    \]
    Thus we can take $C=1$ and $\kappa=2\alpha$.

    To prove convergence to the rank-one consensus subspace, define
    $y(t)\in\mathbb{R}^m$ as the first row of $\textbf{H}(t)$, i.e., $y(t)^\top
        \defeq \textbf{H}_{1\cdot}(t)$. Then for any node $i$ and any feature dimension
    $k$,
    \[
        |\textbf{H}_{ik}(t) - y_k(t)| \le \mathrm{diam}(\textbf{H}_{\cdot k}(t)).
    \]
    Therefore,
    \[
        \|\textbf{H}(t) - \mathbf{1}y(t)^\top\|_F^2
        = \sum_{i=1}^N \sum_{k=1}^m |\textbf{H}_{ik}(t)-y_k(t)|^2
        \le \sum_{i=1}^N \sum_{k=1}^m \mathrm{diam}(\textbf{H}_{\cdot k}(t))^2
        = N \sum_{k=1}^m \mathrm{diam}(\textbf{H}_{\cdot k}(t))^2.
    \]
    Since each $\mathrm{diam}(\textbf{H}_{\cdot k}(t))$ decays exponentially to
    $0$, the right-hand side vanishes as $t\to\infty$, implying
    $\|\textbf{H}(t)-\mathbf{1}y(t)^\top\|_F\to 0$.
\end{proof}

\subsection{Proof of Corollary~\ref{cor:dense_sa}}
\label{proof:corollary37}

\begin{proof}
    Recall the definition of the entries of the row-stochastic transition matrix $\mathbf{P}(t)$ for soft attention:
    \begin{equation}
        \mathbf{P}_{ij}(t) = \frac{\exp(\langle \mathbf{h}_i(t), \mathbf{h}_j(t) \rangle / \tau)}{\sum_{k=1}^N \exp(\langle \mathbf{h}_i(t), \mathbf{h}_k(t) \rangle / \tau)}.
    \end{equation}
    By the Cauchy-Schwarz inequality and the boundedness assumption $\|\mathbf{h}_i(t)\| \le B$, the inner product is bounded by:
    \begin{equation}
        |\langle \mathbf{h}_i(t), \mathbf{h}_j(t) \rangle| \le \|\mathbf{h}_i(t)\| \|\mathbf{h}_j(t)\| \le B^2.
    \end{equation}
    Consequently, for any pair $(i,j)$, the unnormalized attention score is bounded within:
    \begin{equation}
        \exp(-B^2/\tau) \le \exp(\langle \mathbf{h}_i(t), \mathbf{h}_j(t) \rangle / \tau) \le \exp(B^2/\tau).
    \end{equation}
    Now we bound the denominator (the partition function $Z_i$). Since the sum contains $N$ terms, and each term is bounded from above by $\exp(B^2/\tau)$:
    \begin{equation}
        \sum_{k=1}^N \exp(\langle \mathbf{h}_i(t), \mathbf{h}_k(t) \rangle / \tau) \le N \exp(B^2/\tau).
    \end{equation}
    Substituting the lower bound of the numerator and the upper bound of the denominator into the expression for $\mathbf{P}_{ij}(t)$:
    \begin{equation}
        \mathbf{P}_{ij}(t) \ge \frac{\exp(-B^2/\tau)}{N \exp(B^2/\tau)} = \frac{1}{N} \exp\left(-\frac{2B^2}{\tau}\right).
    \end{equation}
    Let $\alpha = \frac{1}{N}\exp(-\frac{2B^2}{\tau})$. Since $B$ is finite and $\tau > 0$, we have $\alpha > 0$. Thus, $\mathbf{P}_{ij}(t) \ge \alpha$ for all $i, j, t$, satisfying Assumption \ref{ass:uniform_pos} (Uniform Positivity). By Theorem~\ref{thm:time_varying_consensus}, the dynamics inevitably converge to the rank-one consensus subspace.
\end{proof}

\subsection{Proof of Proposition~\ref{prop:hysteresis}}
\begin{proof}
    Equilibria satisfy $u_{ij}-u_{ij}^3+\mathcal{F}_{ij}=0$, \ie,~\eqref{eq:cubic_fixed_points}.
    At a saddle-node transition, the cubic has a double root, so both
    $f(u)\defeq u^3-u-\mathcal{F}_{ij}=0$ and $f'(u)=3u^2-1=0$ hold.
    Thus $u=\pm 1/\sqrt{3}$ and $\mathcal{F}_{ij}=u^3-u=\pm 2/(3\sqrt{3})$,
    giving~\eqref{eq:fcrit}. The stated stability and regime characterization follow from standard
    one-dimensional bifurcation analysis of the forced double-well potential.
\end{proof}

\subsection{Force margin implies polarization of edge potentials}
\label{app:polarization}

\begin{lemma}[Force margin implies global polarization]
    \label{lem:margin_polarization}
    Consider the scalar edge-potential dynamics
    \begin{equation}
        \label{eq:scalar_hysteresis_app}
        \dot u = u-u^3+F,
    \end{equation}
    where $F\in\mathbb{R}$ is constant and $F_{\mathrm{crit}}\defeq \frac{2}{3\sqrt{3}}$.
    If $|F|>F_{\mathrm{crit}}$, then \eqref{eq:scalar_hysteresis_app} admits a \emph{unique} equilibrium $u^\star(F)$,
    and this equilibrium is \emph{globally asymptotically stable}. Moreover,
    $\mathrm{sign}(u^\star(F))=\mathrm{sign}(F)$.
    In particular, if $|F|\ge F_{\mathrm{crit}}+\delta$ for some $\delta>0$, then every trajectory satisfies
    $u(t)\to u^\star(F)$ as $t\to\infty$, i.e., the potential polarizes to the connected well for $F>0$
    and to the insulated well for $F<0$.
\end{lemma}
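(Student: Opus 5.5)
The plan has three steps: count the real roots of the cubic, show the unique equilibrium attracts every trajectory by a one-dimensional sign argument, and read off the sign of the equilibrium.

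\textbf{Step 1 (uniqueness).} Equilibria are the real roots of $g(u)\defeq u^3-u=F$. The function $g$ has critical points at $u=\pm 1/\sqrt{3}$, as in the proof of Proposition~\ref{prop:hysteresis}. Its local maximum value is $g(-1/\sqrt{3})=F_{\mathrm{crit}}$ and its local minimum value is $g(1/\sqrt{3})=-F_{\mathrm{crit}}$. On $[-1/\sqrt{3},1/\sqrt{3}]$ we therefore have $|g|\le F_{\mathrm{crit}}$, so no root with $|F|>F_{\mathrm{crit}}$ lies there.

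Take $F>F_{\mathrm{crit}}$. On $(-\infty,-1/\sqrt{3}]$, $g$ is increasing with maximum value $F_{\mathrm{crit}}<F$, so there is no root. On $[1/\sqrt{3},\infty)$, $g$ is strictly increasing from $-F_{\mathrm{crit}}$ to $+\infty$, so there is exactly one root $u^\star$, and $u^\star>1/\sqrt{3}>0$. The case $F<-F_{\mathrm{crit}}$ follows from the symmetry $u\mapsto -u$, $F\mapsto -F$, which gives a unique root $u^\star<-1/\sqrt{3}<0$. This also yields $\mathrm{sign}(u^\star)=\mathrm{sign}(F)$.

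\textbf{Step 2 (global attraction).} Write $f(u)\defeq u-u^3+F$. This is a scalar autonomous ODE with locally Lipschitz right-hand side and a single zero $u^\star$. Because $f(u)\to\mp\infty$ as $u\to\pm\infty$ and $f$ has no other zeros, continuity gives $f>0$ for $u<u^\star$ and $f<0$ for $u>u^\star$.

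First, $u^\star$ is a constant solution, so by uniqueness of solutions no trajectory can cross it. Hence any trajectory starting below $u^\star$ is strictly increasing and bounded above by $u^\star$, and symmetrically from above. This bound also rules out finite-time blow-up, so solutions exist for all $t\ge 0$. A monotone bounded trajectory converges to some limit $\bar u$, and the limit of a convergent trajectory must satisfy $f(\bar u)=0$; otherwise $\dot u$ would stay bounded away from $0$ near the end and $u$ could not converge. Therefore $\bar u=u^\star$.

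Local asymptotic stability also holds, either from this monotone picture or from $f'(u^\star)=1-3(u^\star)^2<0$, which follows from $|u^\star|>1/\sqrt{3}$. Together with global attraction, this gives global asymptotic stability.

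\textbf{Step 3 (conclusion).} The condition $|F|\ge F_{\mathrm{crit}}+\delta$ with $\delta>0$ implies $|F|>F_{\mathrm{crit}}$, so Steps 1--2 apply directly. The resulting polarization is toward the connected well when $F>0$ and toward the insulated well when $F<0$.

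The only delicate point is the root counting in Step 1, specifically showing that no root sits on the wrong branch. The explicit values of $g$ at its critical points settle this, so I do not expect a serious obstacle. The remaining care is the standard one-dimensional fact that trajectories of a scalar ODE cannot cross equilibria.
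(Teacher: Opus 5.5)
Your proof is correct and follows essentially the same route as the paper's. Both show that $u^3-u=F$ has a unique root, use the sign of $u-u^3+F$ on either side of $u^\star$ to get monotone bounded (hence convergent) trajectories, and read off the sign of $u^\star$. Your version is slightly tighter in two places: you count roots directly from the critical values $\pm F_{\mathrm{crit}}$ rather than citing Proposition~\ref{prop:hysteresis}, and you justify $f'(u^\star)<0$ via $|u^\star|>1/\sqrt{3}$, where the paper only asserts that the unique equilibrium must be stable.
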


\begin{proof}
    Define $g(u)\defeq u-u^3+F$. Equilibria are roots of $g(u)=0$, i.e., of the cubic $u^3-u=F$.
    By Proposition~\ref{prop:hysteresis}, when $|F|>F_{\mathrm{crit}}$ the system is in the monostable regime,
    so the cubic has exactly one real root; denote it by $u^\star(F)$.

    Local stability follows from the linearization $g'(u)=1-3u^2$. In the
    monostable regime, the unique equilibrium must be stable, hence
    $g'(u^\star)<0$.

    To see global convergence, note that $g(u)$ is continuous and has a unique zero
    at $u^\star$. Therefore, $g(u)$ has a fixed sign on each side of $u^\star$:
    $g(u)<0$ for all $u>u^\star$ and $g(u)>0$ for all $u<u^\star$ (otherwise an
    additional root would exist). Hence, if $u(0)>u^\star$, then $\dot u=g(u)<0$
    until the trajectory approaches $u^\star$; similarly, if $u(0)<u^\star$, then
    $\dot u>0$. Thus $u(t)$ is monotone and bounded, so it converges, and the only
    possible limit is the equilibrium $u^\star$. This proves global asymptotic
    stability.

    Finally, $\mathrm{sign}(u^\star(F))=\mathrm{sign}(F)$ follows from $g(0)=F$
    together with $\lim_{u\to +\infty}g(u)=-\infty$ and $\lim_{u\to
            -\infty}g(u)=+\infty$: for $F>0$ the unique root lies in $(0,\infty)$, and for
    $F<0$ it lies in $(-\infty,0)$.
\end{proof}

\section{Supplementary Experiments and Implementation Details}
\label{app:supplementary_details}

This appendix provides graph-transformer comparisons, efficiency profiling, and
reproducibility details referenced from Sec.~\ref{sec:realworld}. The main text
focuses on the core mechanism and the closest continuous-depth baselines.

\subsection{Graph transformer baselines}
\label{app:transformer_baselines}

Graph transformers use high-capacity attention or subgraph-tokenization
mechanisms that are complementary to the continuous-time dynamics studied in
the main text. We report them separately to keep Table~\ref{tb:performance}
focused on message-passing and continuous-depth baselines.

\begin{table}[H]
    \caption{Comparison with graph-transformer baselines. Entries are mean $\pm$ standard deviation under matched splits and comparable parameter budgets when applicable. The top three values are marked using the same convention as Table~\ref{tb:performance}.}
    \label{tab:transformer_baselines}
    \centering
    {\footnotesize
        \setlength{\tabcolsep}{5pt}
        \renewcommand{\arraystretch}{1.05}
        \begin{tabular}{lcccccc}
            \toprule
            Model         & Cora                    & Chameleon               & ogbn-proteins           & ZINC                    & Peptides-func            & ogbg-molpcba            \\
                          & Acc.$\uparrow$          & Acc.$\uparrow$          & ROC-AUC$\uparrow$       & MAE$\downarrow$         & A.P.$\uparrow$           & A.P.$\uparrow$          \\
            \midrule
            GraphGPS      & 82.95$\pm$1.23          & 41.04$\pm$3.87          & 77.25$\pm$0.42          & \best{0.069$\pm$0.008}  & \third{0.655$\pm$0.004}  & \second{0.291$\pm$0.001} \\
            Polynormer    & \third{83.24$\pm$0.72}  & \third{41.69$\pm$3.22}  & \third{79.41$\pm$0.56}  & --                      & --                       & --                      \\
            SGFormer      & \second{84.59$\pm$0.43} & \second{45.77$\pm$3.48} & \second{79.84$\pm$0.61} & --                      & --                       & --                      \\
            Subgraphormer & --                      & --                      & --                      & \third{0.082$\pm$0.04}  & \second{0.661$\pm$0.005} & \best{0.293$\pm$0.004}  \\
            HGODE         & \best{86.26$\pm$0.78}   & \best{72.56$\pm$1.24}   & \best{81.24$\pm$0.63}   & \second{0.078$\pm$0.025} & \best{0.714$\pm$0.022}   & \third{0.278$\pm$0.003} \\
            \bottomrule
        \end{tabular}}
\end{table}

\subsection{Efficiency profiling}
\label{app:efficiency}

\begin{table}[H]
    \caption{Efficiency profiling on one NVIDIA A100 40GB GPU in FP32 with adjoint training and hidden dimension 256. ``--'' denotes an unavailable NFE count for the corresponding fixed-flow implementation.}
    \label{tab:efficiency}
    \centering
    {\footnotesize
        \setlength{\tabcolsep}{4pt}
        \renewcommand{\arraystretch}{1.05}
        \begin{tabular}{llccrrrr}
            \toprule
            Dataset       & Model & \multicolumn{2}{c}{NFE} & \multicolumn{2}{c}{Time [ms]} & \multicolumn{2}{c}{Memory [MB]}                              \\
            \cmidrule(lr){3-4}\cmidrule(lr){5-6}\cmidrule(lr){7-8}
                          &       & Fwd.                    & Bwd.                          & Train                           & Infer. & Train   & Infer.  \\
            \midrule
            Cora          & HGODE & 86                      & 128                           & 356.79                          & 111.11 & 548.49  & 237.39  \\
            Cora          & GRAND & 82                      & 117                           & 324.64                          & 109.45 & 509.69  & 179.18  \\
            Cora          & FLODE & --                      & --                            & 317.25                          & 106.17 & 522.16  & 234.64  \\
            Cora          & GREAD & 76                      & 124                           & 327.16                          & 98.87  & 549.34  & 201.68  \\
            \midrule
            Chameleon     & HGODE & 74                      & 110                           & 303.52                          & 85.87  & 387.47  & 163.85  \\
            Chameleon     & GRAND & 77                      & 102                           & 312.48                          & 89.44  & 331.34  & 83.16   \\
            Chameleon     & FLODE & --                      & --                            & 298.14                          & 64.38  & 378.61  & 160.82  \\
            Chameleon     & GREAD & 81                      & 106                           & 316.37                          & 68.19  & 364.34  & 146.37  \\
            \midrule
            ZINC          & HGODE & 88                      & 127                           & 2630.62                         & 649.51 & 8078.65 & 4546.13 \\
            ZINC          & GRAND & 84                      & 118                           & 2232.48                         & 526.18 & 6465.37 & 3589.16 \\
            ZINC          & FLODE & --                      & --                            & 2429.95                         & 622.81 & 7801.24 & 4678.24 \\
            ZINC          & GREAD & 87                      & 104                           & 2559.34                         & 619.13 & 7762.36 & 4518.39 \\
            \midrule
            Peptides-func & HGODE & 92                      & 134                           & 2715.87                         & 740.47 & 8051.21 & 4665.97 \\
            Peptides-func & GRAND & 87                      & 124                           & 2348.34                         & 529.16 & 6466.17 & 3674.13 \\
            Peptides-func & FLODE & --                      & --                            & 2431.64                         & 654.34 & 7762.33 & 4526.18 \\
            Peptides-func & GREAD & 78                      & 116                           & 2498.36                         & 668.36 & 7893.47 & 4329.16 \\
            \bottomrule
        \end{tabular}}
\end{table}

\subsection{Hyperparameter search and reproducibility}
\label{app:reproducibility}

We use adaptive \texttt{dopri5} with \texttt{rtol}=\texttt{atol}=$10^{-5}$ in
all reported HGODE experiments. The optimizer is Adam. Unless otherwise noted,
non-GCN baselines are matched to roughly comparable parameter budgets by
adjusting hidden dimensions. Candidate sets are constructed before integration
from local completion, spectral/random-walk proposals, and optional random
exploration edges.

\begin{table}[H]
    \caption{HGODE hyperparameter search space used for the reported experiments.}
    \label{tab:hparam_ranges}
    \centering
    {\footnotesize
        \setlength{\tabcolsep}{4pt}
        \renewcommand{\arraystretch}{1.05}
        \begin{tabular}{p{0.22\textwidth}p{0.22\textwidth}p{0.48\textwidth}}
            \toprule
            Group                   & Hyperparameter              & Range                                       \\
            \midrule
            Structural / hysteresis & $\lambda$                   & $\{0.1,0.3,0.5,0.8\}$                      \\
            Structural / hysteresis & $\tau$                      & $\{0.1,0.2,0.3,0.5\}$                      \\
            Structural / hysteresis & $\tau_{feat},\tau_{topo}$   & $\{0.3,0.5,1.0\}$                          \\
            Structural / hysteresis & $\gamma$                    & $\{0.2,0.5,1.0\}$                          \\
            Force / margin          & $s$                         & $\{1,1.5\}$                                \\
            Force / margin          & $\delta$                    & $\{0.1,0.2,0.3,0.5\}$                      \\
            Force / margin          & $\beta$                     & $\{0.1,0.3,0.5,0.7\}$                      \\
            Candidate pool          & \texttt{random\_ratio}      & $\{0,10^{-3},5{\times}10^{-3},10^{-2}\}$   \\
            Candidate pool          & $k_{2hop}$                  & $\{0,4,8,12\}$                             \\
            Candidate pool          & $k_{lap}$                   & $\{0,2,4,8\}$                              \\
            Backbone / optimization & \texttt{hidden\_dim}        & $\{128,256,512\}$                          \\
            Backbone / optimization & \texttt{dropout}            & $\{0.2,0.5\}$                              \\
            Backbone / optimization & \texttt{lr}                 & $\{10^{-4},5{\times}10^{-4},10^{-3}\}$     \\
            Backbone / optimization & $T$                         & $\{0.3,0.6,1.0\}$                          \\
            \bottomrule
        \end{tabular}}
\end{table}

\begin{table}[H]
    \caption{Representative HGODE starting configurations by dataset regime.}
    \label{tab:hparam_starts}
    \centering
    {\footnotesize
        \setlength{\tabcolsep}{3pt}
        \renewcommand{\arraystretch}{1.08}
        \begin{tabular}{p{0.18\textwidth}p{0.22\textwidth}p{0.29\textwidth}p{0.25\textwidth}}
            \toprule
            Dataset regime                       & Examples                     & Hysteresis start                                                                     & Force start                                                \\
            \midrule
            Homophilous local     & Cora, ZINC                   & $\lambda\in[0.1,0.3]$, $\tau\in[0.2,0.3]$, $\tau_{feat}=\tau_{topo}=1$               & $s=1.0$, $\delta=0.1$, $\beta\in\{0,0.1\}$                 \\
            Homophilous global    & ogbg-molpcba                 & $\lambda\in[0.3,0.5]$, $\tau\in[0.1,0.2]$, $\tau_{feat}=\tau_{topo}=1$               & $s\in[1.0,1.5]$, $\delta\in[0.1,0.2]$, $\beta\in[0.1,0.3]$ \\
            Heterophilous local   & Chameleon                    & $\lambda\in[0.4,0.6]$, $\tau\in[0.05,0.1]$, $\tau_{feat}=\tau_{topo}=1$              & $s\in[1.0,1.5]$, $\delta\in[0.2,0.3]$, $\beta\in[0.3,0.5]$ \\
            Heterophilous global  & ogbn-proteins; Peptides-func & $\lambda\in[0.5,0.8]$, $\tau\in[0.05,0.1]$, $\tau_{feat}=1$, $\tau_{topo}\in[0.5,1]$ & $s\in[1.0,1.5]$, $\delta\in[0.2,0.3]$, $\beta\in[0.1,0.3]$ \\
            \bottomrule
        \end{tabular}}
\end{table}

\section{Bistability and Candidate-Pool Interpretation}
\label{app:bistability_candidate_pool}

Per-edge bistability is sufficient for multi-class structure because the local
decision is binary: should this pair support feature mixing, or should it be
insulated? A graph with many candidate edges then has many possible stable sign
patterns, so global multistability arises compositionally from many locally
bistable edge units. When intra-cluster edges polarize to the connected phase
and cross-cluster edges polarize to the insulated phase, the effective
propagation matrix becomes approximately block-structured even though each
individual edge has only two phases.

The candidate pool is a scalability design rather than a separate modeling
assumption. The full all-pairs version evolves $\mathbf U(t)\in\mathbb
    R^{N\times N}$ and is conceptually valid when resources permit. Sparse pools
choose a tractable subset of this full matrix: observed edges preserve the
input graph support, local proposals add multi-hop completion, random-walk or
spectral proposals add global relations, and random pairs provide exploration.
Increasing the pool size therefore interpolates between graph-support
polarization and dense topology search.

\end{document}